\DeclareMathOperator*{\Tr}{Tr}
\DeclareMathOperator*{\expec}{\mathbb E}
\newcommand{\MMD}{\text{MMD}}
\DeclareMathOperator*{\diag}{diag}
\DeclareMathOperator*{\argmax}{argmax}
\DeclareMathOperator*{\argmin}{argmin}
\DeclareMathOperator*{\cov}{Cov}
\DeclareMathOperator*{\hyp}{{Hyp}}
\DeclareMathOperator*{\disc}{\mathrm{disc}}
\DeclareMathOperator*{\discl}{\mathrm{disc}_{\ell}}
\DeclareMathOperator*{\radem}{\mathfrak{R}}
\renewcommand{\algorithmicrequire}{\textbf{Input:}}
\renewcommand{\algorithmicensure}{\textbf{Output:}}
\newcommand{\cF}{{\mathcal F}}
\newcommand{\cH}{{\mathcal H}}
\newcommand{\cL}{{\mathcal L}}
\newcommand{\cX}{{\mathcal X}}
\newcommand{\cY}{{\mathcal Y}}
\newcommand{\ba}{{\mathbf a}}
\newcommand{\bb}{{\mathbf b}}
\newcommand{\bB}{{\mathbf B}}
\newcommand{\bH}{{\mathbf H}}
\newcommand{\bI}{{\mathbf I}}
\newcommand{\bK}{{\mathbf K}}
\newcommand{\bL}{{\mathbf L}}
\newcommand{\bm}{{\mathbf m}}
\newcommand{\bM}{{\mathbf M}}
\newcommand{\bP}{{\mathbf P}}
\newcommand{\bQ}{{\mathbf Q}}
\newcommand{\bS}{{\mathbf S}}
\newcommand{\bt}{{\mathbf t}}
\newcommand{\bW}{{\mathbf W}}
\newcommand{\bx}{{\mathbf x}}
\newcommand{\bX}{{\mathbf X}}
\newcommand{\bz}{{\mathbf z}}
\newcommand{\bZ}{{\mathbf Z}}
\newcommand{\bmu}{\boldsymbol{\mu}}
\newcommand{\bbP}{{\mathbb P}}
\newcommand{\bbQ}{{\mathbb Q}}
\newcommand{\bbD}{{\mathbb D}}
\newcommand{\bbR}{{\mathbb R}}
\newtheorem{thm}{Theorem}
\newtheorem*{thm*}{Theorem}
\newtheorem{lem}[thm]{Lemma}
\newtheorem{defn}{Definition}
\newtheorem{rem}{Remark}
\begin{document}

\title{Scatter Component Analysis: A Unified Framework for Domain Adaptation and Domain Generalization}

\author{Muhammad~Ghifary,
	David Balduzzi, 
        W. Bastiaan~Kleijn,
        and~Mengjie~Zhang \\
Victoria University of Wellington \\
{\tt\small \{muhammad.ghifary,bastiaan.kleijn,mengjie.zhang\}@ecs.vuw.ac.nz, david.balduzzi@vuw.ac.nz}
}

\markboth{}%
{Ghifary \MakeLowercase{\textit{et al.}}}

\IEEEtitleabstractindextext{%

\begin{abstract}
This paper addresses classification tasks on a particular target domain in which labeled training data are only available from source domains different from (but related to) the target. Two closely related frameworks, domain adaptation and domain generalization, are concerned with such tasks, where the only difference between those frameworks is the availability of the unlabeled target data: domain adaptation can leverage unlabeled target information, while domain generalization cannot.
We propose \emph{Scatter Component Analyis} (SCA), a fast representation learning algorithm that can be applied to both domain adaptation and domain generalization. 
SCA is based on a simple geometrical measure, i.e., \emph{scatter}, which operates on \emph{reproducing kernel Hilbert space}.
SCA finds a representation that trades between maximizing the separability of classes, minimizing the mismatch between domains, and maximizing the separability of data; each of which is quantified through \emph{scatter}. The optimization problem of SCA can be reduced to a generalized eigenvalue problem, which results in a fast and exact solution. 
Comprehensive experiments on benchmark cross-domain object recognition datasets verify that SCA performs much faster than several state-of-the-art algorithms and also provides state-of-the-art classification accuracy in both domain adaptation and domain generalization.
We also show that \emph{scatter} can be used to establish a theoretical generalization bound in the case of domain adaptation.

\end{abstract}

\begin{IEEEkeywords}
Domain adaptation, domain generalization, feature learning, kernel methods, scatter, object recognition.
\end{IEEEkeywords}
}
\maketitle

\section{Introduction}
\label{sec:intro}
Supervised learning is perhaps the most popular task in machine learning and has recently achieved dramatic successes in many applications such as object recognition~\cite{Krizhevsky:2012aa,Simonyan2015}, object detection~\cite{girshick14CVPR}, speech recognition~\cite{dahl:13}, and machine translation~\cite{sutskever:14}.
These successes derive in large part from the availability of massive \emph{labeled} datasets such as PASCAL VOC2007 \cite{pascal-voc-2007} and ImageNet \cite{Krizhevsky:2009aa}.
Unfortunately, obtaining labels is often a time-consuming and costly process that requires human experts.
Furthermore, the process of collecting samples is prone to \emph{dataset bias}~\cite{Ponce2006,Torralba2011}, i.e., a learning algorithm trained on a particular dataset generalizes poorly across datasets.
In object recognition, for example, training images may be collected under specific conditions involving camera viewpoints, backgrounds, lighting conditions, and object transformations.
In such situations, the classifiers obtained with learning algorithms operating on samples from one dataset cannot be directly applied to other related datasets.
Developing learning algorithms that are robust to label scarcity and dataset bias is therefore an important and compelling problem.

\emph{Domain adaptation}~\cite{Blitzer:2006aa} and \emph{domain generalization}~\cite{Blanchard2011} have been proposed to overcome the fore-mentioned issues.  
In this context, a \emph{domain} represents a probability distribution from which the samples are drawn and is often equated with a dataset.
The domain is usually divided into two different types: the \emph{source domain} and the \emph{target domain}, to distinguish between a domain with labeled samples and a domain without labeled samples.
These two domains are related but different, which limits the applicability of standard supervised learning models on the target domain.
In particular, the basic assumption in standard supervised learning that training and test data come from the same distribution is violated.

The goal of domain adaptation is to produce good models on a target domain, by training on labels from the source domain(s) and leveraging \emph{unlabeled} samples from the target domain as supplementary information during training.
Domain adaptation has demonstrated significant successes in various applications, such as sentiment classification \cite{Chen:2012ab,Glorot:2011aa}, 
visual object recognition~\cite{Hoffman:2013aa,Long2014a,Saenko:2010aa,Shekhar:2013}, and WiFi localization~\cite{Pan:2009aa}.

Finally, the problem of domain generalization arises in situations where unlabeled target samples are not available, but samples from multiple source domains can be accessed.
Examples of domain generalization applications are automatic gating of flow cytometry~\cite{Blanchard2011,Muandet2013} and visual object recognition~\cite{Fang2013,Khosla2012,Xu2014}.

The main practical issue is that several state-of-the-art domain adaptation and domain generalization algorithms for object recognition result in optimization problems that are inefficient to solve~\cite{Long:2013aa,Long2014a,Shekhar:2013,Xu2014}.
Therefore, they may not be suitable in situations that require a real-time learning stage.
Furthermore, although domain adaptation and domain generalization are closely related problems, domain adaptation algorithms \emph{cannot} in general be applied directly to domain generalization, since they rely on the availability of (unlabeled) samples from the target domain.
It is highly desirable to develop algorithms that can be computed more efficiently, are compatible with both domain adaptation and domain generalization, and provides state-of-the-art performance.

\vspace{-1em}
\subsection{Goals and Objectives}
To address the fore-mentioned issues, we propose a fast unified algorithm for reducing dataset bias that can be used for both domain adaptation and domain generalization. 
The basic idea of our algorithm is to learn representations as inputs to a classifier that are invariant to dataset bias.
Intuitively, the learnt representations should incorporate four requirements: 
(i) separate points with different labels and 
(ii) separate the data as a whole (high variance), whilst 
(iii) not separating points sharing a label and 
(iv) reducing mismatch between the two or more domains.
The main contributions of this paper are as follows:
\begin{itemize}[leftmargin=*]
 \item The first contribution is \emph{scatter}, a simple geometric function that quantifies the mean squared distance of a distribution from its centroid. 
 We show that the above four requirements can be encoded through scatter and establish the relationship with Linear Discriminant Analysis \cite{Fisher1936}, Principal Component Analysis, Maximum Mean Discrepancy \cite{Borgwardt:2006aa} and Distributional Variance \cite{Muandet2013}.
 \item The second contribution is a fast scatter-based feature learning algorithm that can be applied to both domain adaptation and domain generalization problems, \emph{Scatter Component Analysis} (SCA), see Algorithm~1. To the best of our knowledge, SCA is the first multi-purpose algorithm applicable across a range of domain adaptation and generalization tasks. 
 The SCA optimization reduces to a generalized eigenproblem that admits a fast and exact solution on par with Kernel PCA~\cite{Scholkopf1998} in terms of time complexity.
 \item The third contribution is the derivation of a theoretical bound for SCA in the case of domain adaptation.
 Our theoretical analysis shows that \emph{domain scatter} 
 controls the generalization performance of SCA.
 We demonstrate that \emph{domain scatter} controls the \emph{discrepancy distance} under certain conditions. The discrepancy distance has previously been shown to control the generalization performance of domain adaptation algorithms~\cite{Mansour2009}. 
\end{itemize}

We performed extensive experiments to evaluate the performance of SCA against a large suite of alternatives in both domain adaptation and domain generalization settings.
We found that SCA performs considerably faster than the prior state-of-the-art across a range of visual object cross-domain recognition, with competitive or better performance in terms of accuracy.

\vspace{-1em}
\subsection{Organization of the Paper}
This paper is organized as follows.
Section~\ref{sec:litrev} describes the problem definitions and reviews existing work on domain adaptation and domain generalization.
Sections \ref{sec:scatter} and \ref{sec:sca} describes our proposed tool and also the corresponding feature learning algorithm, \emph{Scatter Component Analysis} (SCA).
The theoretical domain adaptation bound for SCA is then presented in Section \ref{sec:bound}.
Comprehensive evaluation results and analyses are provided in Sections~\ref{sec:exp1} and \ref{sec:exp2}.
Finally, Section \ref{sec:conc} concludes the paper.

\vspace{-1em}
\section{Background and Literature Review}
\label{sec:litrev}
This section establishes the basic definitions of domains, domain adaptation, and domain generalization.
It then reviews existing work in domain adaptation and domain generalization, particularly in the area of computer vision and object recognition.

A \emph{domain} is a probability distribution $\bbP_{XY}$ on $\cX \times \cY$, where $\cX$ and $\cY$ are the input and label spaces respectively. 
For the sake of simplicity, we equate $\bbP_{XY}$ with $\bbP$.
The terms domain and distribution are used interchangeably throughout the paper. 
Let $S = \{ x_i, y_i\}_{i=1}^{n} \sim \bbP$ be an i.i.d. sample from a domain. It is convenient to use the notation $\hat{\bbP}$ for the corresponding empirical distribution $\hat{\bbP}(x,y)=\frac{1}{n}\sum_{i=1}^n \delta_{(x_i,y_i)}(x,y)$, where $\delta$ is the Dirac delta.
We define \emph{domain adaptation} and \emph{domain generalization} as follows.
\begin{defn}[\textbf{Domain Adaptation}]
  \label{def:domadap}
 Let $\bbP^s$ and $\bbP^t$ be a source and target domain respectively, where $\bbP^s \neq \bbP^t$.
 Denote by $S^s = \{x^s_i, y^s_i \}_{i=1}^{n_s} \sim \bbP^s$ and $S^t_u = \{x^t_i\}_{i=1}^{n_t} \sim \bbP^t_X$ samples drawn from both domains.
 The task of domain adaptation is to learn a good labeling function $f_{\bbP^t} : \cX \rightarrow \cY$ given $S^s$ and $S^t_u$ as the training examples.
\end{defn}

\begin{defn}[\textbf{Domain Generalization}]
  \label{def:domgen}
  Let $\Delta = \{ \bbP^1,\ldots, \bbP^m \}$ be a set of $m$ source domains and $\bbP^t \notin \Delta$ be a target domain.
  Denote by $S^d = \{ x^d_i, y^d_i\}_{i=1}^{n_d} \sim \bbP^d$ samples drawn from $m$ source domains.
  The task of domain generalization is to learn a labeling function $f_{\bbP^t}: \cX \rightarrow \cY$ given $S^d, \forall d=1,...,m$ as the training examples.
\end{defn}
It is instructive to compare these two related definitions.
The main difference between domain adaptation and domain generalization is on \emph{the availability of the unlabeled target samples}.
Both have the same goal: learning a labeling function $f: \cX \rightarrow \cY$ that performs well on the target domain.
In practice, domain generalization requires $m>1$ to work well although $m=1$ might not violate Definition~\ref{def:domgen}.
Note that domain generalization can be exactly reduced to domain adaptation if $m=2$ and $\bbP^t_X \in \Delta$.

Domain adaptation and domain generalization have recently attracted great interest in machine learning.
We present a review of recent literature that is organized into two parts: i) domain adaptation and ii) domain generalization.

\vspace{-1em}
\subsection{Domain Adaptation}
Earlier studies on domain adaptation focused on natural language processing, see, e.g., \cite{Jiang:2008aa} and references therein.
Domain adaptation has gained increasing attention in computer vision for solving dataset bias in object recognition \cite{Saenko:2010aa,Gong:2012aa,Tommasi:2013ab,Fernando:2013aa,Hoffman_CVPR2014,Long2014a} and object detection \cite{Sun:BMVC2014}.
The reader is encouraged to consult the recent survey in visual domain adaptation~\cite{patel_dasurvey:2015} for a more comprehensive review.
We classify domain adaptation algorithms into three categories: i) the classifier adaptation approach, ii) the selection/reweighting approach, and iii) the feature transformation-based approach.

The \emph{classifier adaptation approach} aims to learn a good, adaptive classifier on a target domain by leveraging knowledge from source or auxiliary domains \cite{Yang:2007aa,DAM:2012,DuanTPAMI2012a,DuanTPAMI2012b,Niu_IJCV16}.
Adaptive Support Vector Machines (A-SVMs)~\cite{Yang:2007aa} utilize \emph{auxiliary classifiers} to adapt a \emph{primary classifier} that performs well on a target domain, where the optimization criterion is similar to standard SVMs. 
The Domain Adaptation Machine (DAM)~\cite{DAM:2012} employs both a domain-dependent regularizer based on a smoothness assumption and a sparsity regularizer in Least-Squares SVMs~\cite{LSSVM:2004}.
Recently, a multi-instance learning based classifier for action and event recognition, trained on weakly labeled web data, was proposed \cite{Niu_IJCV16}.

The \emph{reweighting/selection approach} reduces sample bias by reweighting or selecting source instances that are `close' to target instances -- selection can be considered as the `hard' version of reweighting.
The basic idea has been studied under the name of \emph{covariate shift}~\cite{Shimodaira:2000aa}.
Gong et al.\cite{Gong:2013ab} applied a convex optimization strategy to select some source images that are maximally similar to the target images according to Maximum Mean Discrepancy~\cite{Borgwardt:2006aa} -- referred to as \emph{landmarks}.
The landmarks are then used to construct multiple auxiliary tasks as a basis for composing domain-invariant features.
Transfer Joint Matching (TJM)~\cite{Long2014a} uses a reweighting strategy as a regularizer based on $\ell_{2,1}$-norm structured sparsity on the source subspace bases.

The \emph{feature transformation-based approach} is perhaps the most popular approach in domain adaptation.
Daume III \cite{Daume-III:2007aa} proposed a simple feature augmentation method by replicating the source and target data, both are in $\mathbb{R}^d$, as well as zero-padding such that the resulting features are in $\mathbb{R}^{3d}$. 
Li et al. \cite{Li_HFA:TPAMI2014} extended the method for the case of heterogeneous features, i.e., source and target features have different dimensionality, by introducing a common subspace learnt via the standard SVM formulation.
A subspace learning-based algorithm, Transfer Component Analysis (TCA) and its semi-supervised version SSTCA~\cite{Pan2011}, utilizes the Maximum Mean Discrepancy (MMD)~\cite{Gretton:2007aa} to minimize dataset bias in WiFi localization and text classification applications.
Metric learning-based domain adaptation approaches have been proposed \cite{Saenko:2010aa, Kulis:2011aa}, which were early studies in object recognition on the Office dataset.
The idea of extracting `intermediate features' to minimize dataset bias by projecting data onto multiple intermediate subspaces was also considered.
Sampling Geodesic Flow (SGF) \cite{Gopalan:2011aa} and Geodesic Flow Kernel (GFK) \cite{Gong:2012aa} generate multiple subspaces via an interpolation between the source and the target subspace on a Grassmann manifold -- a point on the manifold is a subspace.
Subspace Alignment (SA) \cite{Fernando:2013aa} transforms a source PCA subspace into a new subspace that is well-aligned to a target PCA subspace without requiring intermediate subspaces.
A recent method called Correlation Alignment (CORAL) facilitates adaptive features by aligning the source and target covariance matrices \cite{Sun:AAAI2016}.
Other subspace learning-based methods such as Transfer Sparse Coding (TSC) \cite{Long:2013aa} and Domain Invariant Projection (DIP) \cite{Baktashmotlagh:2013} make use of MMD, following TCA, to match the source and target distributions in the feature space.
One of the methods proposed in \cite{Baktashmotlagh:2014} follows a similar intuition by using Hellinger distance as an alternative to MMD.
Algorithms based on hierarchical non-linear features or deep learning are also capable of producing powerful domain adaptive features 
\cite{Chopra:2013aa,Donahue:2014aa,Ganin2015,Ghifary2014b,Hoffman:2013aa,Long_DAN:2015}.

Several works have addressed Probably Approximately Correct theoretical bounds for domain adaptation.
Ben-David et al.~\cite{Ben-David:2007aa} presented the first theoretical analysis of domain adaptation, that is, an adaptation bound in classification tasks based on the $d_{\mathcal{A}}$-distance \cite{Kifer:2004}.
Mansour et al.~\cite{Mansour2009} extended this work in several ways built on Rademacher complexity~\cite{Bartlett:2002} and the \emph{discrepancy distance}, as an alternative to $d_{\mathcal{A}}$-distance.
In this paper, we provide a domain adaptation bound for our new algorithm based on the latter analysis.

\vspace{-0.5em}
\subsection{Domain Generalization}
Domain generalization is a newer line of research than domain adaptation.
Blanchard et al.~\cite{Blanchard2011} first studied this issue and proposed an augmented SVM that encodes empirical marginal distributions into the kernel for solving automatic gating of flow cytometry.
A feature projection-based algorithm, Domain-Invariant Component Analysis (DICA)~\cite{Muandet2013}, was then introduced to solve the same problem.
DICA extends Kernel PCA~\cite{Scholkopf1998} by incorporating the \emph{distributional variance} to reduce the dissimilarity across domains and the central subspace~\cite{COIR:2011} to capture the functional relationship between the features and their corresponding labels.

Domain generalization algorithms also have been used in object recognition.
Khosla et al.~\cite{Khosla2012} proposed a multi-task max-margin classifier, which we refer to as Undo-Bias, that explicitly encodes dataset-specific biases in feature space.
These biases are used to push the dataset-specific weights to be similar to the global weights. 
Fang et al.\cite{Fang2013} developed Unbiased Metric Learning (UML) based on a learning-to-rank framework. 
Validated on weakly-labeled web images, UML produces a less biased distance metric that provides good object recognition performance.
Xu et al.\cite{Xu2014} extended an exemplar-SVM~\cite{Malisiewicz:2011} to domain generalization by adding a nuclear norm-based regularizer that captures the likelihoods of all positive samples. 
The proposed model is referred to as LRE-SVM that provides the state-of-the-art performance.
More recently, an autoencoder based algorithm to extract domain-invariant features via multi-task learning has been proposed \cite{Ghifary:ICCV2015}.

Although both domain adaptation and domain generalization have the same goal (reducing dataset bias), the approaches are generally not compatible to each other -- domain adaptation methods cannot be directly applied to domain generalization or vice versa.
To our best knowledge, only LRE-SVM can be applied to both domain adaptation and domain generalization.
The domain generalization algorithm formulation as in DICA, Undo-Bias, or UML typically does not allow to take into account unlabeled data from the target domain.
Furthermore, several state-of-the-art domain adaptation and domain generalization algorithms such as TJM and LRE-SVM, require the solution of a computationally complex optimization that induces high complexity in time.
In this work, we establish a fast algorithm that overcomes the above issues.

\section{Scatter}
\label{sec:scatter}
We work in a feature space that is a reproducing kernel Hilbert space (RKHS) $\cH$.
The main motivation is to transform original inputs onto $\cH$, which is high or possibly infinite dimensional space, with the hope that the new features are linearly separable.
The most important property of RKHS is perhaps to allow a computationally feasible transformation onto $\cH$ by virtue of the \emph{kernel trick}.
\begin{defn}[\textbf{Reproducing Kernel Hilbert Space}] 
\label{d:rkhs}
 Let $\cX$ be an arbitrary set and $\cH\subset\{f:\cX\rightarrow\bbR\}$ a Hilbert space of functions on $\cX$. Define the evaluation functional $L_x: \cH \rightarrow \bbR$ by $L_x[h] := h(x), \forall h \in \cH$. Then $\cH$ is a \textbf{reproducing kernel Hilbert space (RKHS)} if the functional $L_x$ is always bounded: i.e.  for all $x\in\cX$ there exists an $\lambda > 0$ such that
 \begin{equation}
 \label{eq:rkhs}
  | L_x[h] | = | h(x) | \leq \lambda \| h \|_{\cH}.
 \end{equation}
 It follows that there is a function $\phi : \cX \rightarrow \cH$ (referred to as the \textbf{canonical feature map}) satisfying: 
  \begin{equation}
  \label{eq:repprop}
  L_x[h] = \langle h, \phi(x) \rangle = h(x)
  \quad\text{for all $h \in \cH$ and $x\in\cX$}.
  \end{equation}
  Consequently, for each $t \in \cX$, one can write
  \begin{equation}
   \langle \phi(t), \phi(x) \rangle =: \kappa(t,x). \nonumber
  \end{equation}
  The function $\kappa : \cX \times \cX \rightarrow \bbR$ is referred to as the \textbf{reproducing kernel}.
\end{defn}
\noindent Expression (\ref{eq:rkhs}) is the weakest condition that ensures the existence of an inner product and also the ability to evaluate each function in $\cH$ at every point in the domain, while (\ref{eq:repprop}) provides more useful notion in practice.

Before introducing scatter, it is convenient to first represent domains as points in RKHS using the mean map \cite{smola:07}:
\begin{defn}[\textbf{Mean map}]
  \label{d:meanmap}
	Suppose that $\cX$ is equipped with a kernel, and that $\cH$ is the corresponding RKHS with feature map $\phi:\cX\rightarrow \cH$. Let $\Delta_\cX$ denote the set of probability distributions on $\cX$. The mean map takes distributions on $\cX$ to points in $\cH$:
	\begin{equation*}
		\label{eq:mean-map}
		\mu:\Delta_\cX\rightarrow \cH:\bbP\mapsto \expec_{x\sim \bbP}\big[ \phi(x)\big] =: \mu_\bbP.
	\end{equation*}
\end{defn}
\noindent Geometrically, the mean map is the centroid of the image of the distribution under $\phi$.
We define \emph{scatter} as the variance of points in the image around its centroid:

\begin{defn}[\textbf{Scatter}]
	\label{d:scatter}
	The \textbf{scatter} of distribution $\bbP$ on $\cX$ relative to $\phi$ is
	\begin{equation*}
		\label{e:scatter}
		\Psi_\phi(\bbP) := \expec_{x\sim \bbP} \Big[\big\|\mu_\bbP - \phi(x)\big\|^2_\cH\Big]
	\end{equation*}
	where $\|\cdot\|_\cH$ is the norm on $\cH$.
\end{defn}

The scatter of a domain cannot be computed directly; instead it is estimated from observations. 
The scatter of a finite set of observations $\{x_1,\ldots, x_n\}$ is computed with respect to the empirical distribution 
\begin{equation*}
	\widehat{\bbP}(x):=\frac{1}{n}\sum_{i=1}^n \delta_{x_i}(x)
	\quad\text{where}\quad
	\delta_{x_i}(x)=\begin{cases}
		1 & \text{if }x_i=x\\
		0 & \text{else.}
	\end{cases}
\end{equation*}

We provide a theorem that shows how the difference between the true scatter and a finite sample estimate decreases with the sample size.
\begin{thm}[\textbf{Scatter Bound}] 
	\label{t:bound}
	Suppose $\bbP$ is a true distribution over all samples of size $n$ and $\hat{\bbP}$ is its empirical distribution. Further suppose that $\|\phi(x)\|^2 \leq M$ for all $x\in\cX$. Then, with probability $\geq 1-\delta$,
	\begin{equation*}
		\big|\Psi_\phi(\bbP) - \Psi_\phi(\hat{\bbP})\big| 
		\leq M\sqrt{\frac{2 \log(\frac{2}{\delta})}{n}}.
	\end{equation*}
\end{thm}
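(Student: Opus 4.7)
The plan is to apply McDiarmid's bounded-differences inequality to the empirical scatter viewed as a function of the sample, and then absorb the small bias between $\expec[\Psi_\phi(\hat{\bbP})]$ and $\Psi_\phi(\bbP)$.

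The key preliminary step is the closed-form identity
\[
\Psi_\phi(\bbP) = \expec_{x\sim\bbP}\big[\|\phi(x)\|^2_\cH\big] - \|\mu_\bbP\|^2_\cH,
\qquad
\Psi_\phi(\hat{\bbP}) = \frac{1}{n}\sum_{i=1}^n \|\phi(x_i)\|^2_\cH - \|\mu_{\hat{\bbP}}\|^2_\cH,
\]
obtained by expanding $\|\phi(x)-\mu_\bbP\|^2_\cH$ and using $\mu_{\hat{\bbP}}=\frac{1}{n}\sum_i \phi(x_i)$. I would then check bounded differences: if we replace $x_k$ by $x_k'$, the first average changes by at most $M/n$ since $\|\phi(\cdot)\|^2_\cH\le M$; and, writing $\|\mu_{\hat{\bbP}'}\|^2 - \|\mu_{\hat{\bbP}}\|^2 = \langle \mu_{\hat{\bbP}'}-\mu_{\hat{\bbP}},\ \mu_{\hat{\bbP}'}+\mu_{\hat{\bbP}}\rangle$ with $\|\mu_{\hat{\bbP}'}-\mu_{\hat{\bbP}}\|\le 2\sqrt{M}/n$ and $\|\mu_{\hat{\bbP}'}+\mu_{\hat{\bbP}}\|\le 2\sqrt{M}$, the second term changes by at most $O(M/n)$. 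So $\Psi_\phi(\hat{\bbP})$ enjoys bounded differences of order $M/n$.

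Applying McDiarmid gives, with probability at least $1-\delta$,
\[
\big|\Psi_\phi(\hat{\bbP}) - \expec[\Psi_\phi(\hat{\bbP})]\big| \;\le\; O\!\left(M\sqrt{\tfrac{\log(2/\delta)}{n}}\right).
\]
For the bias, a direct computation using $\expec\|\mu_{\hat{\bbP}}\|^2 = \tfrac{1}{n}\expec\|\phi\|^2 + \tfrac{n-1}{n}\|\mu_\bbP\|^2$ yields $\expec[\Psi_\phi(\hat{\bbP})] = \tfrac{n-1}{n}\Psi_\phi(\bbP)$, so $|\expec[\Psi_\phi(\hat{\bbP})]-\Psi_\phi(\bbP)|\le M/n$, which is a $1/n$ correction dominated by the $1/\sqrt{n}$ deviation. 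Combining via the triangle inequality gives the advertised inequality.

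The main obstacle is obtaining the stated constant: a naive expansion yields a bounded-differences constant near $5M/n$, while the target rate $M\sqrt{2\log(2/\delta)/n}$ corresponds to $c\approx 2M/n$. Tightening this requires either exploiting cancellation between the shift of $\tfrac{1}{n}\sum\|\phi(x_i)\|^2$ and that of $\|\mu_{\hat{\bbP}}\|^2$, or rewriting scatter as the V-statistic $\Psi_\phi(\hat{\bbP}) = \tfrac{1}{2n^2}\sum_{i,j}\|\phi(x_i)-\phi(x_j)\|^2_\cH$ and using that only $2(n-1)$ entries change under a single coordinate perturbation. The Hoeffding--scalar and mean-embedding concentration arguments are routine once the identity above is in hand.
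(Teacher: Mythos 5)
Your overall strategy---the expansion $\Psi_\phi(\bbP)=\expec_{x\sim\bbP}\|\phi(x)\|^2-\|\mu_\bbP\|^2$, bounded differences, McDiarmid, and the $\tfrac{n-1}{n}$ bias identity---is the standard concentration argument that the form of the bound dictates (the paper relegates its proof to unseen supplementary material, but the constant $M\sqrt{2\log(2/\delta)/n}$ is exactly what McDiarmid produces with sensitivity $2M/n$), and every step you actually write down is correct. The gap is precisely the one you flag, and neither of your proposed repairs closes it under the stated hypothesis alone. With only $\|\phi(x)\|^2\le M$, the single-coordinate sensitivity of $\Psi_\phi(\hat{\bbP})$ is genuinely of order $4M/n$: writing $\mu_{\hat{\bbP}}=v+\phi(x_k)/n$ with $v=\tfrac{1}{n}\sum_{i\ne k}\phi(x_i)$, the change under $x_k\mapsto x_k'$ equals
\begin{equation*}
\Big(1-\tfrac{1}{n}\Big)\frac{\|\phi(x_k')\|^2-\|\phi(x_k)\|^2}{n}\;-\;\frac{2}{n}\big\langle v,\ \phi(x_k')-\phi(x_k)\big\rangle,
\end{equation*}
and choosing $\phi(x_k)=\sqrt{M}w$, $\phi(x_k')=-\sqrt{M}w$ with all remaining points at $\sqrt{M}w$ makes this equal to $4M(n-1)/n^2$. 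The same obstruction reappears in the V-statistic form, since $\|\phi(x_i)-\phi(x_j)\|^2$ can reach $4M$ when the kernel takes negative values. So plain McDiarmid on $\Psi_\phi(\hat{\bbP})$ yields only $2M\sqrt{2\log(2/\delta)/n}$, i.e., the theorem up to a factor of $2$.

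What rescues the stated constant is the additional assumption $\kappa(x,x')\ge 0$---true for the Gaussian RBF kernel the paper actually uses, where moreover $M=1$. Then $\|\phi(x_i)-\phi(x_j)\|^2=\kappa(x_i,x_i)+\kappa(x_j,x_j)-2\kappa(x_i,x_j)\le 2M$, your V-statistic route gives $c_k\le 2M(n-1)/n^2$, and McDiarmid delivers the deviation term $\tfrac{n-1}{n}M\sqrt{2\log(2/\delta)/n}$. Even then, the bias $\Psi_\phi(\bbP)-\expec[\Psi_\phi(\hat{\bbP})]=\tfrac{1}{n}\Psi_\phi(\bbP)\le M/n$, which you correctly compute, is an additive remainder that is not absorbed by the slack $\tfrac{1}{n}M\sqrt{2\log(2/\delta)/n}$ once $n>2\log(2/\delta)$; the honest conclusion of your argument is therefore the stated inequality either with an extra $+M/n$ on the right, or with the constant degraded for general kernels. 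You should state one of these explicitly rather than leave the tightening as an aspiration; up to such constants the proof is complete, and that weaker form suffices for every use the paper makes of the theorem.
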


\begin{proof} See Supplementary material. \end{proof}
\noindent Note that the right hand site of the bound is smaller for lower values of $M$ and higher values of $n$. 
Furthermore, if $\kappa$ is in the form of Gaussian kernel, the bound only depends on $n$ since $M=1$.

We provide an example for later use. If the input space is a vector space and $\phi$ is the identity then it follows immediately that
\begin{lem}[\textbf{Total variance as scatter}]	
	\label{t:total_variance}
	The scatter of the set of $p$-dimensional points (in a matrix form) $\bX=[\bx_1,\ldots,\bx_n]^{\top} \in \bbR^{n \times p}$ relative to the identity map 
	$\phi(\bx) := \bx$, is the total variance:
	\begin{equation*}
		\Psi(\bX) = \frac{1}{n}\Tr (\bX - \bar{\bX})^\top (\bX - \bar{\bX}) = \Tr \cov(\bX),
	\end{equation*}
	where $\Tr(\cdot)$ denotes the trace operation and $\bar{\bX} = [\bar{\bx}, \ldots, \bar{\bx}]^\top$ with $\bar{\bx} = \sum_{i=1}^n \bx_i$.
\end{lem}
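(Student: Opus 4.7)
The plan is to unpack definitions directly: specialize the definition of scatter (Definition~\ref{d:scatter}) to the empirical distribution $\widehat{\bbP}$ of the columns of $\bX$ using the identity feature map, then recognize the resulting sum of squared deviations as the trace of the (uncentered) covariance matrix.

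First, I would compute the mean map. Under $\phi(\bx) = \bx$, Definition~\ref{d:meanmap} gives
\begin{equation*}
  \mu_{\widehat{\bbP}} \;=\; \expec_{\bx \sim \widehat{\bbP}}[\phi(\bx)] \;=\; \frac{1}{n}\sum_{i=1}^n \bx_i \;=\; \bar{\bx},
\end{equation*}
which is exactly the common row of the matrix $\bar{\bX}$ in the statement. Substituting back into Definition~\ref{d:scatter} yields
\begin{equation*}
  \Psi(\bX) \;=\; \Psi_\phi(\widehat{\bbP}) \;=\; \frac{1}{n}\sum_{i=1}^n \bigl\|\bar{\bx} - \bx_i\bigr\|^2.
\end{equation*}

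Next, I would rewrite this sum in matrix form. Each row of $\bX - \bar{\bX}$ is $(\bx_i - \bar{\bx})^\top$, so
\begin{equation*}
  \bigl(\bX - \bar{\bX}\bigr)^\top \bigl(\bX - \bar{\bX}\bigr) \;=\; \sum_{i=1}^n (\bx_i - \bar{\bx})(\bx_i - \bar{\bx})^\top,
\end{equation*}
a $p \times p$ matrix. Taking the trace and using linearity together with the cyclic identity $\Tr(\bv\bv^\top) = \bv^\top \bv = \|\bv\|^2$ gives
\begin{equation*}
  \Tr\bigl[(\bX - \bar{\bX})^\top (\bX - \bar{\bX})\bigr] \;=\; \sum_{i=1}^n \|\bx_i - \bar{\bx}\|^2,
\end{equation*}
so dividing by $n$ recovers the first claimed equality. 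The second equality is then immediate since, under the $1/n$ convention, $\cov(\bX) = \tfrac{1}{n}(\bX - \bar{\bX})^\top(\bX - \bar{\bX})$, and the trace of the covariance matrix is the total variance.

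There is no real obstacle in this argument: it is a one-screen identity chase through the definitions of the mean map, scatter, and empirical covariance, with the only subtlety being the trace/cyclic-norm manipulation that bridges the vector sum and the matrix-form expression. I would keep the write-up short and emphasize that this lemma functions primarily as the linear/Euclidean template that motivates the general RKHS-valued scatter of Definition~\ref{d:scatter}.
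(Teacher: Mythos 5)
Your proposal is correct and is exactly the definitional unpacking the paper has in mind: the paper offers no written proof, asserting the lemma "follows immediately" from Definition~\ref{d:scatter} with $\phi$ the identity, and your computation (mean map equals the sample mean, scatter equals the average squared deviation, trace identity converts to matrix form) is the natural way to make that immediacy explicit. The only minor point worth noting is that the statement's $\bar{\bx} = \sum_{i=1}^n \bx_i$ is missing a $\frac{1}{n}$ factor, which you correctly and implicitly repair.
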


We utilize \emph{scatter} to formulate a feature learning algorithm referred to as Scatter Component Analysis (SCA).
Specifically, \emph{scatter} quantifies requirements needed in SCA to develop an effective solution for both domain adaptation and generalization, which will be described in the next section.

\vspace{-1em}
\section{Scatter Component Analysis (SCA)}
\label{sec:sca}
SCA aims to efficiently learn a representation that improves both domain adaptation and domain generalization. 
The strategy is to convert the observations into a configuration of points in feature space such that the domain mismatch is reduced. 
SCA then finds a representation of the problem (that is, a linear transformation of feature space) for which 
(i) the source and target domains are similar and 
(ii) elements with the same label are similar; whereas 
(iii) elements with different labels are well separated and 
(iv) the variance of the whole data is maximized.
Each requirement can be quantified through \emph{scatter} that leads to four consequences: (i) \emph{domain scatter}, (ii) \emph{between-class scatter}, (iii) \emph{within-class scatter}, and (iv) \emph{total scatter}.

The remainder of the subsection defines the above four scatter quantities in more detail (along the way relating the terms to principal component analysis, the maximum mean discrepancy, and Fisher's linear discriminant) and describes the SCA's learning algorithm.
We will also see that SCA can be easily switched to either domain adaptation or domain generalization by modifying the configuration of the input domains.

\subsection{Total Scatter}
Given $m$ domains $\bbP^1_X,\ldots,\bbP^m_X$ on $\cX$, we define the total domain as the mean $\bar{\bbP}_{X} = \frac{1}{m} \sum_{d=1}^{m}\bbP^d_X$.
The \emph{total scatter} is then defined by
\begin{equation}
    \text{total scatter } = \Psi_\phi\left(\bar{\bbP}_X \right).
\end{equation}
It is worth emphasizing that this definition is general in the sense that it covers both domain adaptation ($m = 2$ and one of them is the target domain) and domain generalization ($m>2$).

Total scatter is estimated from data as follows. 
Let $\bX = [\bx_1, ..., \bx_n]^{\top} \in \bbR^{n \times p}$ be the matrix of unlabeled samples from all $m$ domains ($n = \sum_{d=1}^m {n_d}$, where $n_d$ is the number of examples in the $d$-th domain).
Given a feature map $\phi: \bbR^{p} \rightarrow \cH$ corresponding to kernel $\kappa$, define a set of functions arranged in a column vector $\mathbf{\Phi} = [\phi(\bx_1), ..., \phi(\bx_n)]^{\top}$.
After centering $\{ \phi(\bx_i)\}_{i=1}^{n}$ by subtracting the mean,
the covariance matrix is $\cov(\mathbf{\Phi}) =  \frac{1}{n}\mathbf{\Phi}^{\top} \mathbf{\Phi}$. By Lemma~\ref{t:total_variance}, 
\begin{equation}
 	\Psi_\phi\left(\hat{\bar{\bbP}}_X \right)
 	= \Tr\cov(\mathbf{\Phi}).
\end{equation}

We are interested in the total scatter after applying a linear transform to a finite relevant subspace $\bW:\cH\rightarrow \bbR^k$.
To avoid the direct computation of $\phi: \cX \rightarrow \cH$, which could be expensive or undoable, we use the \emph{kernel trick}.
Let $\bZ = \mathbf{\Phi} \bW \in \mathbb{R}^{n \times k}$ be the $n$ transformed feature vectors and
$[\bK]_{ij} = [\mathbf{\Phi} \mathbf{\Phi}^{\top}]_{ij} = [\kappa(\bx_i, \bx_j)]$.
After fixing $\bB \in \bbR^{n \times k}$ such that $\bW = \mathbf{\Phi}^{\top} \bB$, 
the total transformed scatter is 
\begin{eqnarray}
	\label{eq:total_scatter}
	\Psi_{\bB\circ \phi}\big(\hat{\bar{\bbP}}_X \big)
	= \Tr{(\frac{1}{n}\bB^{\top} \bK \bK \bB)}.
\end{eqnarray}
We remark that, in our notation, Kernel Principal Component Analysis (KPCA)~\cite{Scholkopf1998} corresponds to the optimization problem
\begin{eqnarray}
	\label{eq:kpca_opt2}
	\max  \Psi_{\bB\circ \phi}\big(\hat{\bar{\bbP}}_X \big).
\end{eqnarray}

\subsection{Domain Scatter}
Suppose we are given $m$ domains $\bbP^1_X,\ldots, \bbP^m_X$ on $\cX$. 
We can think of the \emph{set} $\{\mu_{\bbP^1_X},\ldots,\mu_{\bbP^m_X}\}\subset \cH$ as a sample from some latent distribution on domains. Equipping the sample with the empirical distribution and computing scatter relative to the identity map on $\cH$ yields \emph{domain scatter}:
\begin{equation}
	\Psi\big(\{\mu_{\bbP^1_X},\ldots,\mu_{\bbP^m_X}\}\big) = \frac{1}{m}\sum_{i=1}^m\big\|\bar{\mu} - \mu_{\bbP^i}\big\|^2,
\end{equation}
where $\bar{\mu} = \frac{1}{m}\sum_{i=1}^m \mu_{\bbP^i}$.
Note that domain scatter coincides with the \emph{distributional variance} introduced in \cite{Muandet2013}.
Domain scatter is also closely related to the Maximum Mean Discrepancy (MMD), used in some domain adaptation algorithms~\cite{Huang:2007,Pan2011,Long2014a}.

\begin{defn}
	Let $\cF$ be a set of functions $f:\cX\rightarrow \bbR$. The \textbf{maximum mean discrepancy} between domains $\bbP$ and $\bbQ$ is
	\begin{equation*}
		\label{eq:mmd}
		\MMD_\cF[\bbP, \bbQ] := \sup_{f\in\cF} \left(\expec_{\bbP}\left[f(x)\right] - \expec_{\bbQ}\left[f(x)\right]\right).
	\end{equation*}
\end{defn}
The MMD measures the extent to which two domains resemble one another from the perspective of function class $\cF$. 
The following theorem relates domain scatter to MMD given two domains, where the case of interest is bounded linear functions on the feature space:

\begin{lem}[\textbf{Scatter recovers MMD}]
	\label{t:mmd}
	The scatter of domains $\bbP$ and $\bbQ$ on $\cX$ is their (squared) maximum mean discrepancy:
	\begin{equation*}
		\Psi(\{\mu_{\bbP},\mu_{\bbQ}\}) = \frac{1}{4}\MMD^2_\cF[\bbP,\bbQ],
	\end{equation*}
	where $\cF=\{f:\cX\rightarrow \bbR \,|\, f\text{ is linear and }\|f\|_\cF\leq 1\}$.

	In particular, if $\phi$ is induced by a characteristic kernel on $\cX$ then $\Psi(\{\mu_{\bbP},\mu_{\bbQ}\}) = 0$ if and only if $\bbP=\bbQ$.
\end{lem}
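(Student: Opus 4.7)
The plan is to split the lemma into two pieces: first the identity $\Psi(\{\mu_\bbP,\mu_\bbQ\}) = \tfrac14\MMD^2_\cF[\bbP,\bbQ]$, and second the ``if and only if'' statement under the characteristic-kernel assumption.

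For the first piece, I would start by unpacking the scatter from Definition~\ref{d:scatter} applied to the two-point empirical distribution on $\{\mu_\bbP,\mu_\bbQ\}\subset\cH$ with the identity map. The centroid is $\bar\mu=\tfrac12(\mu_\bbP+\mu_\bbQ)$, so both deviations $\bar\mu-\mu_\bbP$ and $\bar\mu-\mu_\bbQ$ equal $\pm\tfrac12(\mu_\bbP-\mu_\bbQ)$. Averaging the two squared norms then yields
\begin{equation*}
\Psi(\{\mu_\bbP,\mu_\bbQ\}) \;=\; \tfrac14\,\|\mu_\bbP-\mu_\bbQ\|_\cH^2.
\end{equation*}

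Next I would rewrite the MMD side in terms of the mean embeddings. Using Definition~\ref{d:meanmap} and the reproducing property (\ref{eq:repprop}), for any $f\in\cH$ (viewed as a bounded linear functional on $\cH$),
\begin{equation*}
\expec_\bbP[f(x)] - \expec_\bbQ[f(x)] \;=\; \langle f,\mu_\bbP-\mu_\bbQ\rangle_\cH.
\end{equation*}
Taking the supremum over the unit ball $\{f:\|f\|_\cF\leq 1\}$ and applying Cauchy--Schwarz (with equality attained at $f = (\mu_\bbP-\mu_\bbQ)/\|\mu_\bbP-\mu_\bbQ\|_\cH$ whenever the denominator is nonzero) gives $\MMD_\cF[\bbP,\bbQ]=\|\mu_\bbP-\mu_\bbQ\|_\cH$. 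Squaring and combining with the scatter identity above yields the claimed equality. The main subtlety here — which I would flag explicitly — is identifying the ``linear functions on $\cX$ with $\|f\|_\cF\leq 1$'' in the lemma with the unit ball in $\cH$; this is immediate once one recognizes that bounded linear functionals on $\cH$ are given by inner products with elements of $\cH$ (Riesz representation, which is essentially the content of (\ref{eq:rkhs})--(\ref{eq:repprop})).

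For the second piece, the characteristic-kernel hypothesis means, by definition, that the mean map $\mu:\Delta_\cX\to\cH$ is injective. Hence $\mu_\bbP=\mu_\bbQ$ iff $\bbP=\bbQ$. Combined with the first part, $\Psi(\{\mu_\bbP,\mu_\bbQ\})=0$ iff $\|\mu_\bbP-\mu_\bbQ\|_\cH=0$ iff $\mu_\bbP=\mu_\bbQ$ iff $\bbP=\bbQ$. I do not expect any real obstacle — the only step that requires care is the equality case in Cauchy--Schwarz when $\mu_\bbP=\mu_\bbQ$, but there the supremum is trivially zero and both sides of the claimed identity vanish, so no separate argument is needed.
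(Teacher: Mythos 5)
Your proposal is correct and follows essentially the same route as the paper: you compute the scatter of the two-point set $\{\mu_\bbP,\mu_\bbQ\}$ exactly as the paper does, obtaining $\tfrac14\|\mu_\bbP-\mu_\bbQ\|^2_\cH$, and then identify this with $\tfrac14\MMD^2_\cF$. The only difference is that where the paper simply cites Theorem~2.2 of Borgwardt et al.\ for the identity $\MMD_\cF[\bbP,\bbQ]=\|\mu_\bbP-\mu_\bbQ\|_\cH$, you supply its standard proof inline (reproducing property plus Cauchy--Schwarz), which is the same mathematical content.
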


\begin{proof}
	Note that the theorem involves two levels of probability distributions: 
	(i) the domains $\bbP$ and $\bbQ$ on $\cX$, and 
	(ii) the empirical distribution on $\cF$ that assigns probability $p=\frac{1}{2}$ to the points $\mu_\bbP$ and $\mu_\bbQ$, and $p=0$ to everything else.
	Let $\bar{\mu}=\frac{1}{2}(\mu_\bbP + \mu_\bbQ)$. By Definition~\ref{d:scatter},
	\begin{equation*}
		\Psi(\{\mu_\bbP,\mu_\bbQ\}) = \frac{1}{2}\|\bar{\mu} - \mu_\bbP\|^2_\cF + \frac{1}{2}\|\bar{\mu} - \mu_\bbQ\|^2_\cF = \frac{1}{4}\|\mu_\bbP-\mu_\bbQ\|^2_\cF.
	\end{equation*}
	The result follows from Theorem~2.2 of \cite{Borgwardt:2006aa}.
\end{proof}
Lemma~\ref{t:mmd} also tells that the domain scatter is a valid metric if the kernel on $\cX$ is characteristic~\cite{sriperumbudur:10}. The most important example of a characteristic kernel is the Gaussian RBF kernel, which is the kernel used in the theoretical results and experiments below.
We also remark that MMD can be estimated from observed data with bound provided in \cite{Gretton:2012aa}, which is analogous to Theorem~\ref{t:bound}.

Domain scatter in a transformed feature space in $\bbR^k$ is estimated as follows. Suppose we have $m$ samples $S^d_u = \{ \mathbf{x}^d_i \}_{i=1}^{n_d} \sim \bbP^d_X$.
Recall that $\bZ =  \mathbf{\Phi} \bW = \bK^{\top} \bB$, where $\bZ = [\bz_1, \ldots, \bz_n]^{\top}$ contains projected samples from all domains: $\mathbf{z}_i = \mathbf{W}^{\top} \phi(\mathbf{x}_i)$ and
\begin{equation}
\label{eq:kernel}
\bK = 
	\begin{bmatrix} 
	  \bK^{11} & \cdots & \bK^{1m}\\
	  \vdots & \ddots & \vdots \\
	  \bK^{m1} & \cdots & \bK^{mm}
	\end{bmatrix} \in \bbR^{n \times n}
\end{equation}
is the corresponding kernel matrix, where $[\bK^{kl}]_{ij} = \kappa(\mathbf{x}^k_i, \mathbf{x}^l_j)$.
By some algebra, the domain scatter is
\begin{eqnarray}
	\label{eq:domain_scatter}
	\Psi_{\bB}\big( \{ \mu_{\hat{\bbP}^d_X}\}_{d=1}^m\big) 
	= \Tr(\bB^{\top} \bK \bL \bK \bB),
\end{eqnarray}
where $\bL \in \bbR^{n \times n}$ is a coefficient matrix
with $[\bL^{kl}]_{ij} = \frac{m-1}{m^2 n^2_k}$ if $k = l$, and $-\frac{1}{m^2 n_k n_l}$ otherwise.

\subsection{Class Scatter}
For each class $k\in\{1,\ldots,C\}$, let $\bbP_{X|k}^l$ denote the conditional distribution on $\cX$ induced by the total labeled domain $\bbP_{XY}^l = \frac{1}{q} \sum_{j=1}^{q}\bbP^j_{XY}$ when $Y=k$  (the number of labeled domains $q$ does not necessarily equal to the number of source domains $m$). 
We define the \emph{within-class scatter} and \emph{between-class scatter} as
\begin{equation}
  \underbrace{\Psi_\phi(\bbP_{X|k}^l)}_{\text{within-class-$k$ scatter}}
  \quad\text{ and }\quad
  \underbrace{\Psi\big(\{\mu_{\bbP_{X|k=1}^l},\ldots, \mu_{\bbP_{X|k=C}^l}\}\big)}_{\text{between-class scatter}}.
\end{equation}

The class scatters are estimated as follows. Let $\bS_k^w = \big(\phi(\bx_j)\big)_{\bx_j\in k}$ denote the $n_k$-tuple of source samples in class $k$. The centroid of $\bS_k^w$ is $\bmu_k= \frac{1}{n_k}\sum_{\bx_i\in k} \phi(\bx_i)$. Furthermore, let $\bS^b = (\bmu_1,\ldots,\bmu_{|C|})$ denote the $n$-tuple of all class centroids \emph{where centroid $k$ appears $n_k$ times in $\bS^b$}. The centroid of $\bS^b$ is then the centroid of the source domain: $\bar{\bmu}^s = \frac{1}{n}\sum_{k=1}^{|C|}n_k \bmu_k$. It follows that the within-class scatter is
\begin{equation*}
	\Psi_{\phi}\Big(\hat{\bbP}^l_{X|y_k}\Big) = \Tr\left(\sum_{j=1}^{n_k} \left( \phi(\bx_{jk}) - \bmu_k \right) \left( \phi(\bx_{jk}) - \bmu_k \right)^{\top}\right)
\end{equation*}
 and the between-class scatter is
\begin{equation*}
	\Psi\Big(\big\{\mu_{\hat{\bbP}^l_{X|y_k}}\big\}_{k=1}^C\Big) = \Tr\left( n_k (\bmu_k - \bar{\bmu}) (\bmu_k - \bar{\bmu})^{\top}\right).
\end{equation*}
The right-hand sides of the above equations are the classical definitions of within- and between- class scatter \cite{Fisher1936}. The classical linear discriminant is thus a ratio of scatters
\begin{equation*}
	\text{Fisher's linear discriminant}= 
	  \frac{\Psi\Big(\big\{\mu_{\hat{\bbP}^l_{X|y_k}}\big\}_{k=1}^C\Big)}
	      {\sum_{k=1}^C \Psi_{\phi}\Big(\hat{\bbP}^l_{X|y_k}\Big)}.
\end{equation*}
Maximizing Fisher's linear discriminant increases the separation of the data points with respect to the class clusters.

Given a linear transformation $\bW : \cH \rightarrow \bbR^k$, it follows from Lemma~\ref{t:total_variance} that the class scatters in the projected feature space $\tilde{\cH}$ are 
\begin{eqnarray}	 
	\label{eq:bclass_scatter}
	\Psi_{\bB}\Big(\big\{\mu_{\hat{\bbP}^l_{X|y_k}}\big\}_{k=1}^C\Big)
	 &=& \Tr (\bW^{\top} \cov(\bS^b)  \bW) \nonumber \\ &=& \Tr (\bB^{\top} \bP_s \bB),\\
	 \label{eq:wclass_scatter}
	\sum_{k=1}^{C} \Psi_{\bB\circ \phi}\Big(\hat{\bbP}^s_{X|y_k}\Big) 
	&=& \sum_{k=1}^{C} \Tr ( \bW^{\top} \cov(\bS^w_k) \bW) \nonumber \\ &=&  \Tr(\bB^{\top} \bQ_s \bB),
\end{eqnarray}
where
\begin{eqnarray}
	\label{eq:between_class_kernel}
	\bP_s &=& \sum_{k=1}^{C} n_k (\bm_k - \bar{\bm})(\bm_k - \bar{\bm})^{\top}, \\
	\label{eq:within_class_kernel}
	\bQ_s  &=& \sum_{k=1}^{C} \bK_k \bH_k \bK^{\top}_k,
\end{eqnarray}
with 
$\bm_k = \frac{1}{n_k} \sum_{j=1}^{n_k} \kappa (\cdot, \bx_{jk})$, 
$\bar{\bm} = \frac{1}{n} \sum_{j=1}^{n} \kappa (\cdot, \bx_j)$,
$[\bK_k]_{ij} = [\kappa(\bx_{ik}, \bx_{jk})]$, and 
the centering matrix $\bH_k = \bI_{n_k} - \frac{1}{n_k}\mathbf{1}_{n_k} \mathbf{1}_{n_k}^{\top}$, where
$\bI_{n_k}$ denotes a $n_k \times n_k$ identity matrix and $\mathbf{1}_{n_k} \in \bbR^{n_k}$ denotes a vector of ones.

\subsection{The Algorithm}
Here we formulate the SCA's learning algorithm by incorporating the above four quantities.
The objective of SCA is to seek a representation by solving an optimization problem in the form of the following expression
\begin{equation}
	\label{e:idea}
	\sup \frac{\{\text{total scatter}\} + \{\text{between-class scatter}\}}{\{\text{domain scatter\}} + \{\text{within-class scatter}\}}.
\end{equation}

Using (\ref{eq:total_scatter}), (\ref{eq:domain_scatter}), (\ref{eq:bclass_scatter}), and (\ref{eq:wclass_scatter}),
the above expression can then be specified in more detail:
\begin{eqnarray}
	\label{eq:ftca_general}
	\argmax_\bB 
	\frac{\Psi_{\bB\circ\phi}\Big( \hat{\bar{\bbP}}_X\Big) 
	+ \Psi_{\bB}\Big(\big\{\mu_{\hat{\bbP}^l_{X|y_k}}\big\}_{k=1}^C\Big)}
	{\Psi_{\bB}\Big(\big\{ \mu_{\hat{\bbP}^d_X} \}_{d=1}^m \Big) 
	+ \sum_{k=1}^C\Psi_{\bB\circ\phi}\Big(\hat{\bbP}^l_{X|y_k}\Big)}.
\end{eqnarray}
Maximizing the numerator encourages SCA to preserve the total variability of the data and the separability of classes. Minimizing the denominator encourages SCA to find a representation for which the source and target domains are similar, and source samples sharing a label are similar.

\textbf{Objective function.}
We reformulate \eqref{eq:ftca_general} in three ways. 
First, we express it in terms of linear algebra. 
Second, we insert hyper-parameters that control the trade-off between scatters as one scatter quantity could be more important than others in a particular case.
Third, we impose the constraint that $\bW^\top\bW=\bB^\top\bK \bB$ is small to control the scale of the solution.

Explicitly, SCA finds a projection matrix $\bB = [\bb_1, \bb_2, ..., \bb_k] $ that solves the constrained optimization
\begin{equation}
	\label{eq:ftca_opt}
	\argmax_{\bB \in \bbR^{n \times k}} 
	\frac{\Tr\big( \bB^{\top} ( \frac{(1-\beta)}{n} \bK \bK+ \beta \bP) \bB \big)}
	{\Tr\big(\bB^{\top}  ( \delta \bK \bL \bK  +   \bQ + \bK) \bB \big)},
\end{equation}
where 
\begin{eqnarray*}
	\label{eq:big_pq}
	\bP = 
	\begin{bmatrix} 
		\bP_{s} & \mathbf{0}_{n_s \times n_t} \\ 
		\mathbf{0}_{n_t \times n_s} & \mathbf{0}_{n_t \times n_t}
	\end{bmatrix},
	\bQ =
	\begin{bmatrix}
		\bQ_{s} & \mathbf{0}_{n_s \times n_t} \\ 
		\mathbf{0}_{n_t \times n_s} & \mathbf{0}_{n_t \times n_t}
	\end{bmatrix},
\end{eqnarray*}
and $\beta, \delta > 0$ are the trade-off parameters controlling the total and between-class scatter, and domain scatter respectively.

Observe that the above optimization is invariant to rescaling $\bB \mapsto \alpha  \bB$.
Therefore, optimization (\ref{eq:ftca_opt}) can be rewritten as
\begin{eqnarray}
	\label{eq:ftca_opt2}
	\argmax_{\bB \in \bbR^{n \times k}} 
	\Tr\big( \bB^{\top} ( \frac{(1-\beta)}{n} \bK \bK+ \beta \bP) \bB \big) \\ 
	\,\,\,\textnormal{ s.t. } \,\,\,
	\Tr\big(\bB^{\top}  ( \delta \bK \bL \bK  +   \bQ + \bK) \bB \big) = 1, \nonumber
\end{eqnarray}
which results in Lagrangian
\begin{eqnarray}
	\label{eq:ftca_lagrang}
	J(\bB) = \Tr( \bB^{\top} ( \frac{(1-\beta)}{n}\bK \bK + \beta \bP)  \bB )   - \nonumber \\
	\Tr(  (\bB^{\top} ( \delta \bK \bL \bK  +  \bQ + \bK) \bB - \bI_k ) \mathbf{\Lambda}   ).
\end{eqnarray}
To solve (\ref{eq:ftca_opt}), set the first derivative $\frac{\partial J(\bB)} { \partial \bB} = 0$, inducing the generalized eigenproblem
\begin{eqnarray}
	\label{eq:ftca_eigenprob}
	\big( \frac{(1-\beta)}{n} \bK \bK + \beta \bP \big) \bB^{*} = 
	\big( \delta \bK \bL \bK + \bK +   \bQ \big) \bB^{*} \mathbf{\Lambda},
\end{eqnarray}
where $\mathbf{\Lambda} = \diag(\lambda_1, ..., \lambda_k)$ are the $k$ leading eigenvalues and $\bB = [\bb_1, ..., \bb_k]$ contains the corresponding eigenvectors.%
\footnote{In the implementation, a numerically more stable variant is obtained by using (\ref{eq:ftca_eigenprob}) using 
$\delta \bK \bL \bK + \bK + \bQ + \epsilon \bI$,
where $\epsilon>0$ is a fixed small constant.
}
Algorithm~\ref{alg:sca} provides a complete summary of SCA.

\begin{algorithm}[tb]
   \caption{
   Scatter Component Analysis}
   \label{alg:sca}
	\algorithmicrequire\; \\
		$\bullet$ Sets of training datapoints $S_u^d = \{ \bx^d_i\}_{i=1}^{n_d}, \forall d=1,\ldots,m$ and their corresponding matrices $\bX = \begin{bmatrix} \bX^1; \ldots ;\bX^m\end{bmatrix} \in \bbR^{n \times p}$, where $\bX^d = [\bx^d_1, \ldots, \bx^d_{n_d}]^{\top}$; \\
		$\bullet$ Training labels $\mathbf{y}^l = [y^1_1, \ldots, y^1_{n_1},\ldots,y^q_1, \ldots, y^q_{n_q}]^{\top} \in \bbR^{n}$; \\
		$\bullet$ Hyper-parameters $\beta,  \delta > 0$; kernel bandwidth $\sigma$; \\
		$\bullet$ Number of subspace bases $k$;
	\begin{algorithmic}[1]
		\STATE Construct kernel matrix $\bK$ from $\bX$, 
			matrices $\bL$, $\bP$ and $\bQ$ based on (\ref{eq:kernel}),  (\ref{eq:between_class_kernel}), (\ref{eq:within_class_kernel}), and (\ref{eq:big_pq});
		\STATE Apply the centering operation 
			$\bK \leftarrow \bK - \mathbf{1}_n \bK - \bK \mathbf{1}_n + \mathbf{1}_n \bK \mathbf{1}_n$,
			where $n = \sum_{d=1}^m n_d$ and $[1_n]_{ij} := \frac{1}{n}$;
		\STATE Obtain the transformation $\bB^{*}$ and its corresponding eigenvalues $\mathbf{\Lambda}$ by solving the generalized eigendecomposition problem in Eq. (\ref{eq:ftca_eigenprob}) and selecting the $k$ leading eigenvectors;
		\STATE Target feature extraction: Let $S_u = \bigcup_{d=1}^{m} S^d_u$ be the total training sample and $S^t_u$ be a target sample (for domain adaptation, $S^t_u \subset S_u$). Construct a kernel matrix $[\mathbf{K}^t]_{ij} = \kappa(\bx_i, \bt_j), \forall \bx_i \in S_u, \bt_j \in S^t_u$. The extracted features are given by $\bZ^t = \mathbf{K}^{t \top} \bB^* \mathbf{\Lambda}^{-\frac{1}{2}}$
	\end{algorithmic}
	\algorithmicensure\; \\
		$\bullet$ Optimal transformation matrix $\bB^{*} \in \bbR^{n \times k}$;\\
		$\bullet$ Feature matrix $\bZ^t \in \bbR^{n_t \times k}$.
\end{algorithm}

\subsection{Relation to other methods}
\label{s:related}
SCA is closely related to a number of feature learning and domain adaptation methods.
To see this, let us observe Lagrangian (\ref{eq:ftca_lagrang}).
Setting the hyper-parameters $\beta=\delta=0$ and $\bQ = \mathbf{0}$ in (\ref{eq:ftca_lagrang}) recovers KPCA. 
Setting $\beta =1$ and $\delta=0$ recovers the Kernel Fisher Discriminant (KFD) method \cite{Mika1999}. 
KFD with linear kernel is equivalent to Fisher's linear discriminant, which is the basis of a domain adaptation method for object detection proposed in \cite{Sun:BMVC2014}.

Setting $\beta=0$ and $\bQ = \mathbf{0}$ (that is, ignoring class separation) yields a new algorithm: unsupervised Scatter Component Analysis (uSCA), which is closely related to TCA. 
The difference between the two algorithms is that TCA constrains the total variance and regularizes the transform, 
whereas uSCA trades-off the total variance and constrains the transform (recall that $\bW^\top \bW$ should be small) motivated by Theorem~\ref{t:bound}. 
It turns out that uSCA consistently outperforms TCA in the case of domain adaptation, see Section~\ref{sec:exp1}.

Eliminating the term $\bB^\top \bK \bB $ from the denominator in (\ref{eq:ftca_opt}) from uSCA yields TCA~\cite{Pan2011}.
The semi-supervised extension SSTCA of TCA differs markedly from SCA. Instead of incorporating within- and between- class scatter into the objective function, SSTCA incorporates a term derived from the Hilbert-Schmidt Independence Criterion that maximizes the dependence of the embedding on labels. 

uSCA is essentially equivalent to unsupervised Domain Invariant Component Analysis (uDICA) in the case of two domains \cite{Muandet2013}. However, as for SSTCA, \emph{supervised} DICA incorporates label-information differently from SCA -- via the notion of a central subspace. In particular, supervised DICA requires that all data points are labeled, and so it cannot be applied in our experiments.

\vspace{-1em}
\subsection{Computational Complexity}
\label{s:runtime}
Here we analyze the computation complexity of the SCA algorithm.
Suppose that we have $m$ domains with $n_1, \ldots, n_m$ are the number of samples for each domain ($m > 2$ covers the domain generalization case).
Denote the total number of samples by $n = n_1 + \ldots + n_m$ and the number of leading eigenvectors by $k \ll n$.
Computing the matrices $\bK$, $\bL$, $\bP$, and $\bQ$ takes $O(n^2)$ (Line 1 at Algorithm~\ref{alg:sca}). 
Hence, the total complexity of SCA after solving the eigendecomposition problem (Line 2) takes $O(k n^2)$, or quadratic in $n$.
This complexity is similar to that of KPCA and Transfer Component Analysis~\cite{Pan2011}.

In comparison to Transfer Joint Matching (TJM)~\cite{Long2014a}, the prior state-of-the-art domain adaptation algorithm for object recognition, TJM uses an alternating eigendecomposition procedure in which $T$ iterations are needed.
Using our notation, the complexity of TJM is $O(Tkn^2)$, i.e., TJM is $T$ times slower than SCA.

\vspace{-1em}
\subsection{Hyper-parameter Settings}
Before reporting the detailed evaluation results, it is important to explain how SCA hyper-parameters were tuned.
The formulation of SCA described in Section~\ref{sec:sca} has four hyper-parameters: 
1) the choice of the kernel,
2) the number of subspace bases $k$,
3) the between-class and total scatters trade-off $\beta$, and
4) the domain scatter $\delta$,.
Tuning all those hyper-parameters using a standard strategy, e.g., a grid-search, might be impractical due to two reasons.
The first is of the computational complexity.
The second, which is crucial, is that cross-validating a large number of hyper-parameters may worsen the generalization on the target domain, since labeled samples from the target domain are not available.

Our strategy to deal with the issue is to reduce the number of tunable hyper-parameters. 
For the kernel selection, we chose the RBF kernel $\exp( \frac{- \| \ba - \bb \|_2^2}{\sigma^2} ), \forall \ba, \bb \in \cX$, where the kernel bandwidth $\sigma$ was set analytically to the median distance between samples in the aggregate domain following~\cite{Gretton:2012aa},
\begin{eqnarray}
 \sigma = \mathrm{median}(\| \ba - \bb \|_2^2), \forall \ba, \bb \in S^s \cup S^t.
\end{eqnarray}
For domain adaptation, $\delta$ was fixed at $1$. Thus, only two hyper-parameters remain tunable: $k$ and $\beta$.
For domain generalization, $\beta$ was set at 1, i.e., the total scatter was eliminated, and $\delta$ was allowed to be tuned  -- the number of tunable hyper-parameters remains unchanged.
The configuration is based on an empirical observation that setting $0 < \beta < 1$ is no better (if not worse) than $\beta=1$ in terms of both the cross-validation and test performance for domain generalization cases.
In all evaluations, we used 5-fold cross validation using source labeled data to find the optimal $k$ and $\beta$.
We found that this strategy is sufficient to produce good SCA models for both domain adaptation and generalization cases.

\section{Analysis of Adaptation Performance}
\label{sec:bound}
We derive a bound for domain adapation that shows how the MMD controls generalization performance in the case of the squared loss $\ell(y,y') = (y - y')^2$. 
Despite the widespread use of the MMD for domain adaptation~\cite{DAML:2011,DAM:2012,Long:2013aa,Long2014a,Pan2011}, to the best of our knowledge, this is the first generalization bound. 
The main idea is to incorporate the MMD (that is, domain scatter) into the adaptation bound proven for the \emph{discrepancy distance}~\cite{Mansour2009}. A generalization bound for domain generalization in terms of domain scatter is given in \cite{Muandet2013}, see remark~\ref{rem:gen_scatter}.

Let $\hyp := \{h : \cX \rightarrow \cY \}$  denote a hypothesis class of functions from $\cX$ to $\cY$ where $\cX$ is a compact set. Given a loss function defined over pairs of labels
$\ell: \cY \times \cY \rightarrow \bbR_{+}$ and a distribution $\bbD$ over $\cX$, let 
$\cL_{\bbD}\big(h, h'\big) = \expec_{x \sim \bbD}[\ell(h(x), h'(x))]$ denote the expected loss for any two hypotheses $h, h' \in \hyp$.
We consider the case where the hypothesis set $\hyp$ is a subset of an RKHS $\cH$.

We first introduce discrepancy distance, $\disc_{\hyp}(\bbP, \bbQ)$, which measures the difference between two distributions $\bbP$ and $\bbQ$.

\begin{defn}[\textbf{Discrepancy Distance}~\cite{Mansour2009}]
\label{def:disc}
Let $\hyp \subset \{ f : \cX \rightarrow \cY\}$ be a set of functions mapping from $\cX$ to $\cY$.
The \textbf{discrepancy distance} between two distributions $\bbP$ and $\bbQ$ over $\cX$ is defined by
\begin{eqnarray}
	\label{eq:discrepancy}
	\disc(\bbP, \bbQ) = \sup_{h,h' \in \hyp} \big| \cL_{\bbP}(h,h') - \cL_{\bbQ}(h,h') \big|
\end{eqnarray}
\end{defn}
The discrepancy is symmetric and satisfies the triangle inequality, but it does not define a distance in general: $\exists \bbP \neq \bbQ$ such that $\disc_{\hyp}(\bbP, \bbQ) = 0$~\cite{Mansour2009}. 

If we assume that we have a \emph{universal kernel}~\cite{Steinwart:2002aa,micchelli:06}, i.e. $\cH = C(\cX)$ as topological spaces, and the loss $\ell$ is the squared loss~\cite{Cortes:2014} then the discrepancy is a metric. The most important example of a universal kernel is the Gaussian RBF kernel, which is the kernel used in the experiments below.

The main step of the proof is to find a relationship between domain scatter and the discrepancy distance. We are able to do so in the special case where the kernel is universal and the loss is the mean-square error. The main technical challenge is that the discrepancy distance is quadratic in the hypotheses (involving terms of the form $h(x)^2$ and $h(x)h'(x)$) whereas the MMD is linear. We therefore need to bound the effects of the multiplication operator:

\begin{defn}[\textbf{Multiplication Operator}]
 \label{def:mulop}
 Let $C(\cX)$ be the space of continuous functions on the compact set $\cX$ equipped with the supremum norm $\| \cdot \|_\infty$. 
 Given $g \in C(\cX)$, define the multiplication operator as the bounded linear operator $\bM_{g} : C(\cX) \rightarrow C(\cX)$ given by
 \begin{equation}
    \bM_{g} (h)(x)  = g(x) h(x). \nonumber
 \end{equation}
\end{defn}

Note that a general RKHS is \emph{not} closed under the multiplication operator~\cite{Grunewalder:2013}. However, since the kernel is universal, it follows that $\cH$ is closed under multiplication since the space of continuous functions $C(\cX)$ is closed under multiplication. Moreover, we can define the sup-norm $\|\cdot\|_\infty$ on $\cH$ using its identification with $C(\cX)$.

The following Lemma upper bounds the norm of multiplication operator, which will be useful to prove our main theorem.
 \begin{lem}\label{lem:mult}
   Given $g$, $h\in \cH$, where $\cH$ is equipped with a universal kernel, it holds that
   $\| \bM_{g} (h) \|_\cH = \|g\cdot h\|_\cH \leq \|g\|_\infty\cdot \|f\|_\cH$.
 \end{lem}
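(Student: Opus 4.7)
The plan is to realize $\bM_g$ as a bounded linear operator on $\cH$ and then bound its operator norm by $\|g\|_\infty$. The equality $\|\bM_g(h)\|_\cH = \|g\cdot h\|_\cH$ is built into the definition of $\bM_g$, and it makes sense precisely because $gh \in \cH$: this follows from the identification of $\cH$ with $C(\cX)$ under the universal kernel hypothesis, since $C(\cX)$ is a Banach algebra under pointwise multiplication and the sup-norm.

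I would first confirm that $\bM_g : \cH \to \cH$ is bounded by invoking the closed graph theorem. Suppose $h_n \to h$ and $\bM_g(h_n) \to k$ in $\cH$. The reproducing property gives
\begin{equation*}
|h_n(x) - h(x)| = |\langle h_n - h, \phi(x)\rangle_\cH| \leq \|h_n - h\|_\cH \sqrt{\kappa(x,x)},
\end{equation*}
so $h_n(x) \to h(x)$ pointwise, and by the same argument $g(x) h_n(x) \to k(x)$ pointwise. On the other hand, $g(x) h_n(x) \to g(x) h(x)$ by continuity of scalar multiplication, so $k = gh$ as functions in $\cH$. Hence the graph of $\bM_g$ is closed, and $\bM_g$ is a bounded operator on $\cH$.

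The heart of the argument, and the main obstacle, is the quantitative bound $\|\bM_g\|_{op} \leq \|g\|_\infty$. For a constant $g \equiv c$ this is a trivial equality, since $\bM_g$ is just scalar multiplication. For general $g$, I would attempt a uniform approximation by simple (piecewise-constant) functions $g_n \to g$ in $C(\cX)$, combine with the constant case by linearity, and pass to the limit. The delicate point is that uniform approximation controls $g - g_n$ in the sup-norm, while the desired conclusion lives in the RKHS-operator norm; closing the argument requires exploiting the topological identification $\cH \cong C(\cX)$ granted by universality to transfer sup-norm control to operator-norm control on $\cH$. An alternative, perhaps cleaner, route would be spectral-theoretic: $\bM_g$ is a (normal) multiplication operator whose spectrum is contained in $\overline{g(\cX)}$, so its operator norm equals its spectral radius and is therefore bounded by $\|g\|_\infty$. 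Once $\|\bM_g(h)\|_\cH \leq \|\bM_g\|_{op}\,\|h\|_\cH \leq \|g\|_\infty\,\|h\|_\cH$ is in hand, the lemma follows.
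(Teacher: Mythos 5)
The paper's own proof is the one-liner ``straightforward calculation,'' and the calculation it evidently has in mind is submultiplicativity of the sup norm, $\|g\cdot h\|_\infty \leq \|g\|_\infty\,\|h\|_\infty$, transported to $\cH$ via the claimed identification of $\cH$ with $C(\cX)$; the only substantive role of universality is to guarantee $g\cdot h\in\cH$ so that the left-hand side is defined. You instead read $\|\cdot\|_\cH$ as the genuine RKHS norm and set out to prove the much stronger statement that every $g\in\cH$ is a contractive multiplier, i.e. $\|\bM_g\|_{\mathrm{op}}\leq\|g\|_\infty$ in the RKHS operator norm. Your closed-graph argument for the mere boundedness of $\bM_g$ is correct (given that $g\cdot h\in\cH$), but the closed graph theorem gives no control on the size of the norm, and neither of your two routes to the quantitative bound closes the gap you yourself identify as ``the heart of the argument.''

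The approximation route is circular: to pass to the limit you need $\|\bM_{g-g_n}\|_{\mathrm{op}}\leq\|g-g_n\|_\infty$, which is exactly the inequality being proved, applied to $g-g_n$ (and piecewise-constant functions are not continuous on a general compact $\cX$ in any case). The spectral route fails because $\bM_g$ is not a normal operator on an RKHS: its adjoint acts on kernel sections by $\bM_g^{*}\,\kappa(\cdot,x)=g(x)\,\kappa(\cdot,x)$, while $\bM_g$ itself does not diagonalize on these sections; the standard example of multiplication by the coordinate function on the Hardy space (the unilateral shift) is maximally non-normal. For non-normal operators the operator norm can strictly exceed the spectral radius, so ``spectrum $\subset\overline{g(\cX)}$'' does not yield the norm bound. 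Indeed, the RKHS-norm inequality $\|g\cdot h\|_\cH\leq\|g\|_\infty\|h\|_\cH$ asserts that the multiplier norm of $g$ is dominated by its sup norm, which is false for typical RKHSs (Dirichlet space, Drury--Arveson space), so no argument of this shape can succeed without additional hypotheses. To match what the paper actually intends, you should either interpret the conclusion in the sup norm induced on $\cH$, where it is the trivial Banach-algebra inequality, or explicitly add the assumption that $\sup$-norm-bounded functions are contractive multipliers of $\cH$ --- but the latter is an assumption, not a consequence of universality.
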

 \begin{proof}
    Straightforward calculation. The Lemma requires a universal kernel since $\|g\cdot h\|_\cH$ is only defined if $g\cdot h\in \cH$.
 \end{proof}

We now show that the domain scatter of two distributions upper-bounds the discrepancy distance.
\begin{lem}[\textbf{Domain scatter bounds discrepancy}]
\label{t:disc_scatter}
  Let $\cH$ be an RKHS with a universal kernel.
  Suppose that $\ell(y,y')= (y - y')^2$ is the square loss, and consider the hypothesis set
\begin{equation*}
   \hyp = \{f\in\cH\,:\,\|f\|_\cH\leq 1 \text{ and }\|f\|_\infty\leq r\},
\end{equation*}
where $r > 0$ is a constant
Let $\bbP$ and $\bbQ$ be two domains over $\cX$. 
Then the following inequality holds:	
\begin{eqnarray}
  \underbrace{\discl(\bbP, \bbQ)}_{\text{discrepancy}} 
  \leq 
  \underbrace{8 r \sqrt{\Psi_{\phi}(\{ \mu_{\bbP}, \mu_{\bbQ}\})}}_{\text{domain scatter}}.
\end{eqnarray}
\end{lem}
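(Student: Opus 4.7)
The plan is to expand the square loss to reduce the discrepancy to an expression that is linear in products of hypotheses, then use the reproducing property to rewrite each expectation difference as an inner product with $\mu_\bbP - \mu_\bbQ$, bound those inner products using Cauchy--Schwarz together with Lemma~\ref{lem:mult}, and finally convert the norm $\|\mu_\bbP - \mu_\bbQ\|_\cH$ into domain scatter via Lemma~\ref{t:mmd}.

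Concretely, first I would write $\ell(h(x),h'(x)) = h(x)^2 - 2 h(x)h'(x) + h'(x)^2$ so that
\begin{equation*}
\cL_\bbP(h,h') - \cL_\bbQ(h,h') = \Delta[h^2] - 2\Delta[hh'] + \Delta[(h')^2],
\end{equation*}
where $\Delta[g] := \expec_\bbP[g] - \expec_\bbQ[g]$. Next, using the reproducing property $g(x) = \langle g, \phi(x)\rangle_\cH$ (valid for any $g\in\cH$), one obtains $\expec_\bbP[g] = \langle g, \mu_\bbP\rangle_\cH$, so that $\Delta[g] = \langle g, \mu_\bbP - \mu_\bbQ\rangle_\cH$. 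Cauchy--Schwarz then gives $|\Delta[g]| \leq \|g\|_\cH \cdot \|\mu_\bbP - \mu_\bbQ\|_\cH$. This step is where universality of the kernel becomes essential: it ensures $\cH$ is closed under pointwise multiplication, so that $h^2$, $hh'$, $(h')^2$ are actually elements of $\cH$ and the inner-product rewriting is legitimate.

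Then I would bound each norm via Lemma~\ref{lem:mult}: for $h, h'\in\hyp$ we have $\|h^2\|_\cH \leq \|h\|_\infty \|h\|_\cH \leq r$, and similarly $\|hh'\|_\cH \leq r$ and $\|(h')^2\|_\cH \leq r$. Summing with the coefficients $(1,2,1)$ yields
\begin{equation*}
\bigl|\cL_\bbP(h,h') - \cL_\bbQ(h,h')\bigr| \;\leq\; 4r\,\|\mu_\bbP - \mu_\bbQ\|_\cH.
\end{equation*}
Finally, Lemma~\ref{t:mmd} (combined with its proof, which identifies $\Psi(\{\mu_\bbP,\mu_\bbQ\}) = \tfrac{1}{4}\|\mu_\bbP - \mu_\bbQ\|_\cH^2$) gives $\|\mu_\bbP - \mu_\bbQ\|_\cH = 2\sqrt{\Psi_\phi(\{\mu_\bbP,\mu_\bbQ\})}$, so taking the supremum over $h, h'\in\hyp$ on the left produces exactly $\discl(\bbP,\bbQ) \leq 8r\sqrt{\Psi_\phi(\{\mu_\bbP,\mu_\bbQ\})}$.

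The main technical subtlety is the mismatch between the quadratic nature of the squared loss and the linear form of the MMD: a direct application of the witness-function characterization of MMD would only bound $|\Delta[h]|$, not $|\Delta[h^2]|$ or $|\Delta[hh']|$. Lemma~\ref{lem:mult} together with the universal-kernel assumption is exactly what bridges this gap, by ensuring the products lie in $\cH$ with controlled RKHS norm. Once that is in hand, the remaining steps are Cauchy--Schwarz and bookkeeping of constants, so this is the only nontrivial point.
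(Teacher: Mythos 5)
Your proposal is correct and follows essentially the same route the paper intends: the surrounding text explicitly identifies the quadratic-vs-linear mismatch as the key obstacle and introduces Lemma~\ref{lem:mult} precisely to bound $\|h^2\|_\cH$, $\|hh'\|_\cH$, $\|(h')^2\|_\cH$ by $r$, after which Cauchy--Schwarz against $\mu_\bbP-\mu_\bbQ$ and the identity $\|\mu_\bbP-\mu_\bbQ\|_\cH = 2\sqrt{\Psi_\phi(\{\mu_\bbP,\mu_\bbQ\})}$ from Lemma~\ref{t:mmd} give the stated constant $(1+2+1)\cdot r\cdot 2 = 8r$. The bookkeeping checks out, and your observation that universality is needed so that the products actually lie in $\cH$ is exactly the point the paper makes.
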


\begin{proof} 
See Supplementary material. 
\end{proof}

Lemma \ref{t:disc_scatter} allows us to relate \emph{domain scatter} to generalization bounds for domain adaptation proven in \cite{Mansour2009}. 
Before stating the bounds, we introduce Rademacher complexity~\cite{Bartlett:2002}, which measures the degree to which a class of functions can fit random noise.
This measure is the basis of bounding the empirical loss and expected loss.

\begin{defn}[\textbf{Rademacher Complexity}]
\label{d:radem}
Let $G$ be a family of functions mapping from $\cX \times \cY$ to $[a,b]$ and $S = (z_1, ..., z_n) \in \cX \times \cY$ be a fixed sample of size $n$.
The empirical Rademacher complexity of $G$ with respect to the sample $S$ is
\begin{eqnarray}
 \hat{\radem}_S(G) = \expec_{\boldsymbol{\sigma}} \left[\sup_{g \in G} \frac{1}{n} \sum_{i=1}^{n} \sigma_i g(z_i) \right],
\end{eqnarray}
where $\boldsymbol{\sigma} = (\sigma_1,\ldots, \sigma_n)^{\top}$ are Rademacher variables, with $\sigma_i$s independent uniform random variables taking values in $\{ -1, +1\}$.
The \textbf{Rademacher complexity} over all samples of size $n$ is
\begin{eqnarray}
 {\radem}_n(G) = \expec_{S} \left [ \hat{\radem}_S(G) \right].
\end{eqnarray}
 
\end{defn}

The supplementary material discusses how to associate a family of functions to a loss function, and provides a useful Rademacher bound.  We now have all the ingredients to derive domain adaptation bounds in terms of domain scatter.

Let $f_{\bbP}$ and $f_{\bbQ}$ be the true labeling functions on domain $\bbP$ and $\bbQ$ respectively, 
and $h^*_{\bbP} := \argmin_{h \in \hyp} \cL_{\bbP}(h, f_{\bbP})$ and $h^*_{\bbQ} := \argmin_{h \in \hyp} \cL_{\bbQ}(h, f_{\bbQ})$ be the minimizers.
For a successful domain adaptation, we shall assume that $\cL_{\bbP}(h^*_{\bbP}, h^*_{\bbQ})$ is small.
The following theorem provides a domain adaptation bound in terms of scatter (recall that the MMD is a special case of scatter by Lemma~\ref{t:mmd}).

\begin{thm}[\textbf{Adaptation bounds with domain scatter}]
\label{t:adapt_bound}
Let $\hyp$ be a family of functions mapping from $\cX$ to $\bbR$,
$S^{\bbP}_{\cX} = (x^t_1,\ldots, x^t_{n_s}) \sim \bbP$ and $S^{\bbQ}_{\cX} = (x^t_1,\ldots, x^t_{n_t}) \sim \bbQ$ be a source and target sample respectively.
Let the rest of the assumptions be as in Lemma~\ref{t:disc_scatter} and Theorem~8 in the supplementary material.
For any hypothesis $h \in \hyp$, with probability at least $1 - \delta$, the following adaptation bound holds:
\begin{eqnarray}
  \label{eq:main_bound}
  \overbrace{\cL_{\bbQ}(h, f_{\bbQ})
  - \cL_{\bbQ}(h^{*}_{\bbQ}, f_{\bbQ})}^{\text{regret on target domain}} 
  \leq 
  \overbrace{\cL_{\hat{\bbP}}(h, h^*_{\bbP})}^{\text{empirical loss}} 
  + \overbrace{2q\hat{\radem}_{S^{\bbP}_{\cX}}(\hyp)}^{\text{Rademacher complexity}} \nonumber 
  \\
  + \underbrace{3 B \sqrt{\frac{\log{ \frac{2}{\delta}} }{2 n_t}}}_{O(1/\sqrt{\text{sample size})}} 
  + \underbrace{8r \sqrt{\Psi_\phi(\{\mu_\bbQ, \mu_\bbP\})}}_{\text{domain scatter}} 
  + \underbrace{\cL_{\bbP}(h^*_{\bbP},h^*_{\bbQ})}_{\text{deviation of optimal solns}}
\end{eqnarray}
\end{thm}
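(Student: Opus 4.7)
The plan is to compose three ingredients: (i) the Mansour--Mohri--Rostamizadeh target-to-source decomposition of the regret via the discrepancy distance, (ii) a Rademacher-complexity concentration bound (Theorem~8 of the supplementary material) to pass from the population source loss $\cL_\bbP(h,h^*_\bbP)$ to its empirical counterpart $\cL_{\hat\bbP}(h,h^*_\bbP)$, and (iii) Lemma~\ref{t:disc_scatter} to replace the discrepancy by $8r\sqrt{\Psi_\phi(\{\mu_\bbP,\mu_\bbQ\})}$. Chaining the three inequalities yields exactly the right-hand side of \eqref{eq:main_bound}.

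For stage (i), I would exploit that the square loss induces a pseudo-metric on hypotheses, so the triangle inequality gives
\begin{equation*}
\cL_\bbQ(h,f_\bbQ)\;\leq\;\cL_\bbQ(h^*_\bbQ,f_\bbQ)+\cL_\bbQ(h,h^*_\bbQ).
\end{equation*}
The next step pivots through $h^*_\bbP$ to transport the work off of $\bbQ$. Using the definition of the discrepancy, $|\cL_\bbP(h,h')-\cL_\bbQ(h,h')|\leq \disc(\bbP,\bbQ)$ for all $h,h'\in\hyp$, and a further application of the triangle inequality,
\begin{align*}
\cL_\bbQ(h,h^*_\bbQ) &\leq \cL_\bbP(h,h^*_\bbQ)+\disc(\bbP,\bbQ) \\
&\leq \cL_\bbP(h,h^*_\bbP)+\cL_\bbP(h^*_\bbP,h^*_\bbQ)+\disc(\bbP,\bbQ).
\end{align*}
Rearranging produces the single-discrepancy Mansour bound
\begin{equation*}
\cL_\bbQ(h,f_\bbQ)-\cL_\bbQ(h^*_\bbQ,f_\bbQ)\;\leq\;\cL_\bbP(h,h^*_\bbP)+\disc(\bbP,\bbQ)+\cL_\bbP(h^*_\bbP,h^*_\bbQ).
\end{equation*}

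For stage (ii), I would apply Theorem~8 of the supplementary material to the loss class $\{x\mapsto \ell(h(x),h^*_\bbP(x)):h\in\hyp\}$; Talagrand's contraction inequality (since $\ell$ is $q$-Lipschitz on the relevant range) converts the Rademacher complexity of this loss class into $2q\,\hat\radem_{S^\bbP_\cX}(\hyp)$, while McDiarmid gives the deviation term $3B\sqrt{\log(2/\delta)/(2n_t)}$. Thus, with probability at least $1-\delta$,
\begin{equation*}
\cL_\bbP(h,h^*_\bbP)\;\leq\;\cL_{\hat\bbP}(h,h^*_\bbP)+2q\,\hat\radem_{S^\bbP_\cX}(\hyp)+3B\sqrt{\frac{\log(2/\delta)}{2n_t}}.
\end{equation*}
For stage (iii), Lemma~\ref{t:disc_scatter} gives $\disc(\bbP,\bbQ)\leq 8r\sqrt{\Psi_\phi(\{\mu_\bbP,\mu_\bbQ\})}$ directly, using the standing assumptions of a universal kernel and the unit-ball/bounded-$\infty$-norm hypothesis set. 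Substituting the stage (ii) and stage (iii) bounds into the stage (i) decomposition yields \eqref{eq:main_bound}.

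The main obstacle is bookkeeping rather than conceptual: one must check that $h^*_\bbP$ is fixed (not data-dependent), so that the class $\{\ell(h,h^*_\bbP):h\in\hyp\}$ is a genuine composition of a Lipschitz loss with $\hyp$ and the contraction principle applies with the factor $2q$; and one must match the constants $q$ and $B$ of Theorem~8 to the Lipschitz constant and sup-norm of the square loss on outputs of $\hyp$, both of which are finite because $\|f\|_\infty\leq r$ for every $f\in\hyp$. The scatter-to-discrepancy step is already delicate (requiring universality of the kernel and the multiplication-operator bound of Lemma~\ref{lem:mult}), but that delicacy is fully absorbed inside Lemma~\ref{t:disc_scatter}, so it does not resurface here.
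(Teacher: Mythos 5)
Your proposal is correct and follows essentially the same route as the paper: the paper likewise chains the Mansour--Mohri--Rostamizadeh decomposition (their Theorem~8), the Rademacher bound from the supplementary material, and Lemma~\ref{t:disc_scatter}. The only difference is that you re-derive the discrepancy decomposition via the triangle inequality inline, whereas the paper simply cites it; this adds nothing and loses nothing.
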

\begin{proof}
Fix $h \in \hyp$. Since the square loss is symmetric and obeys the triangle inequality, Theorem 8 in \cite{Mansour2009} (see supplementary material) implies that
\begin{eqnarray}
  \label{eq:triangle_loss}
 \cL_{\bbQ}(h, f_{\bbQ}) - \cL_{\bbQ}(h^{*}_{\bbQ}, f_{\bbQ}) \leq \cL_{\bbP}(h, h^*_{\bbP}) + \discl(\bbQ, \bbP) \nonumber \\ 
  + \cL_{\bbP}(h^*_{\bbP}, h^*_{\bbQ}).
\end{eqnarray}
The result then follows by Lemma~\ref{t:disc_scatter} combined with the Rademacher bound in the supplementary material.
\end{proof}
\noindent It is instructive to compare Theorem \ref{t:adapt_bound} above with Theorem 9 in \cite{Mansour2009}, 
which is the analog if we expand $\disc_l(\bbQ, \bbP)$ in (\ref{eq:triangle_loss}) with its empirical measure.
It is also straightforward to rewrite the bound in term of the \emph{empirical scatter} $\Psi_\phi(\{ \mu_{\hat{\bbP}}, \mu_{\hat{\bbQ}}\})$ by applying Theorem~\ref{t:bound}.

The significance of Theorem \ref{t:adapt_bound} is twofold. 
First, it highlights that the scatter $\Psi_\phi(\{ \mu_\bbP, \mu_\bbQ\} )$ controls the generalization performance in domain adaptation.
Second, the bound shows a direct connection between \emph{scatter} (also MMD) and the domain adaptation theory proposed in \cite{Mansour2009}.
Note that the bound might not be useful for practical purposes, since it is loose and pessimistic as they hold for all hypotheses and all possible data distributions.

\begin{rem}[\textbf{The role of scatter in domain generalization}]\label{rem:gen_scatter}
 Theorem~5 of \cite{Muandet2013} shows that the domain scatter (or, alternatively, the distributional variance) is one of the key terms arising in a generalization bound in the setting of domain generalization.  
\end{rem}

\vspace{-1em}

\section{Experiment I : Domain Adaptation}
\label{sec:exp1}
The first set of experiments evaluated the domain adaptation performance of SCA on synthetic data and real-world object recognition tasks.
The synthetic data was designed to understand the behavior of the learned features compared to other algorithms, 
whereas the real-world images were utilized to verify the performance of SCA.

The experiments are divided into two parts.
Section~\ref{sec:toy} visualizes performance on synthetic data.
Section~\ref{sec:exp1_vals} evaluates performance on a range of cross-domain object recognition tasks with a standard yet realistic hyper-parameter tuning.
Some additional results with a tuning protocol established in the literature are also reported in the supplementary material for completeness.

\begin{figure*}[htp]
	\centering
	\subfigure{\includegraphics[width=1.2in]{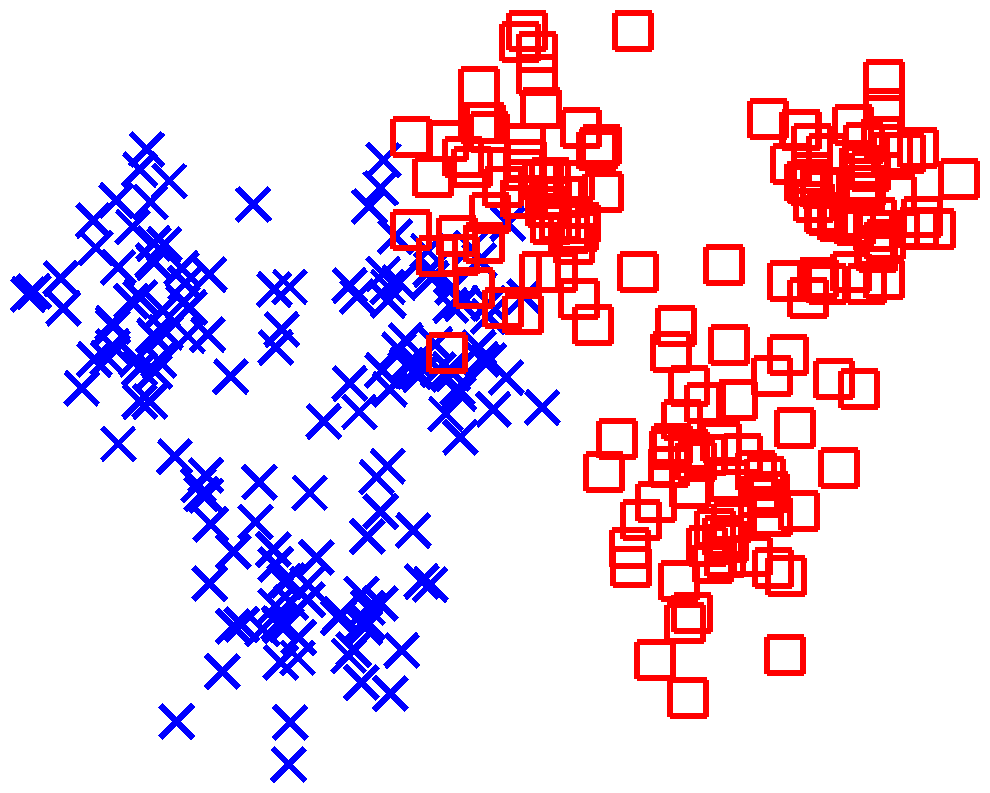}} \quad 
	\subfigure{\includegraphics[width=1.2in]{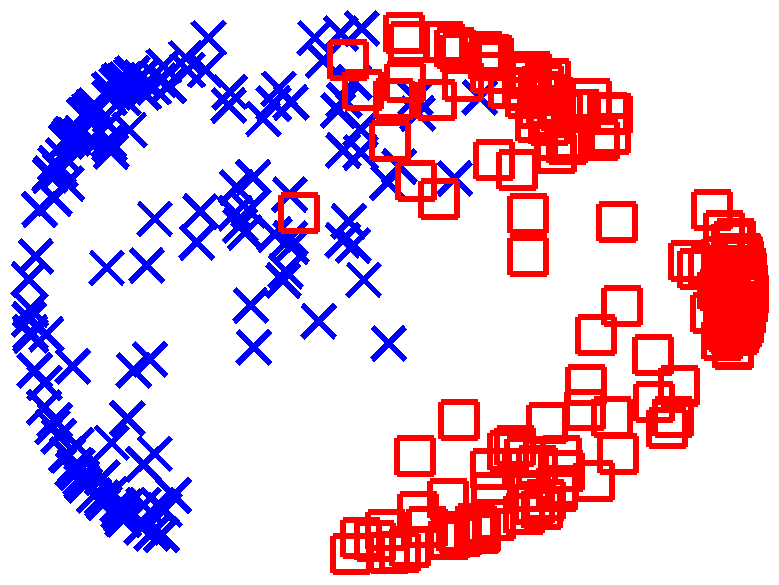}} \quad 
	\subfigure{\includegraphics[width=1.2in]{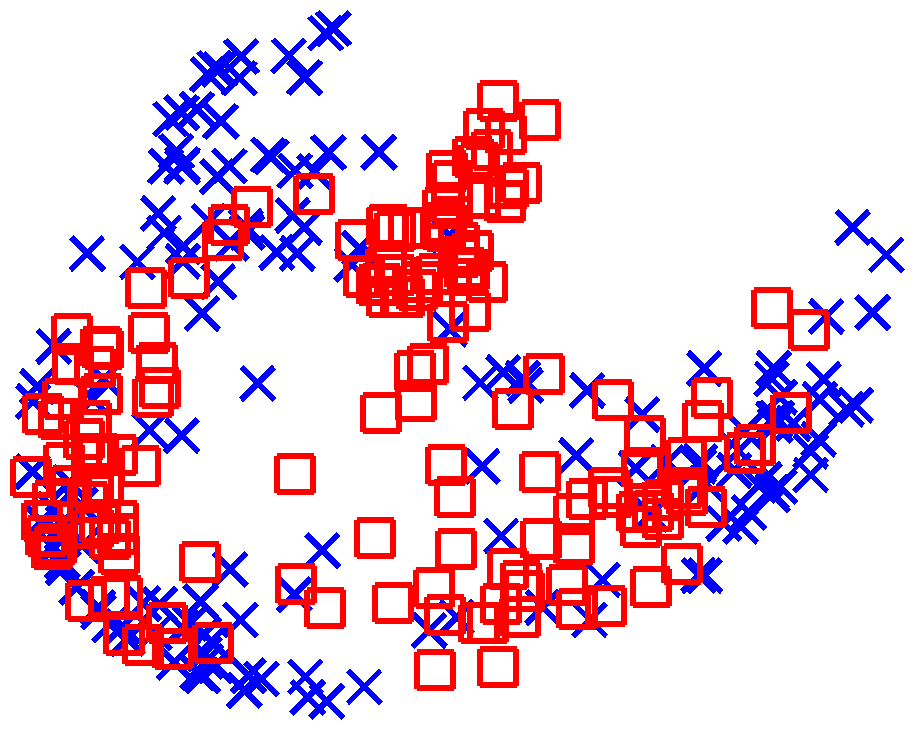}} \quad
	\subfigure{\includegraphics[width=1.2in]{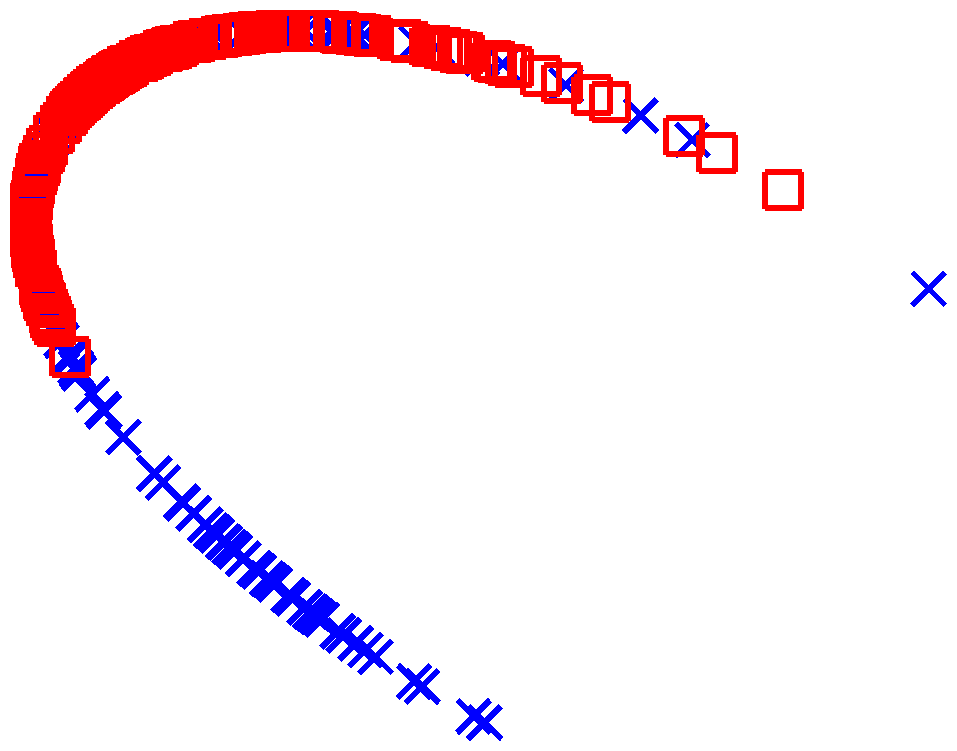}} \quad 
	\subfigure{\includegraphics[width=1.2in]{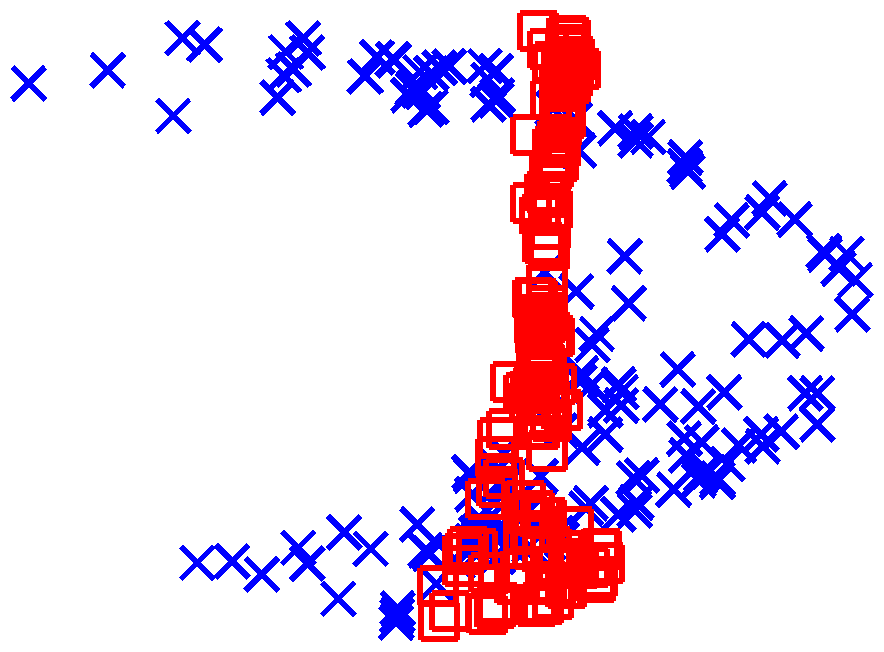}}
	\setcounter{subfigure}{0}
	\subfigure[{\small Raw ($28\%)$}]{\label{fig:toydataset}\includegraphics[width=1.2in]{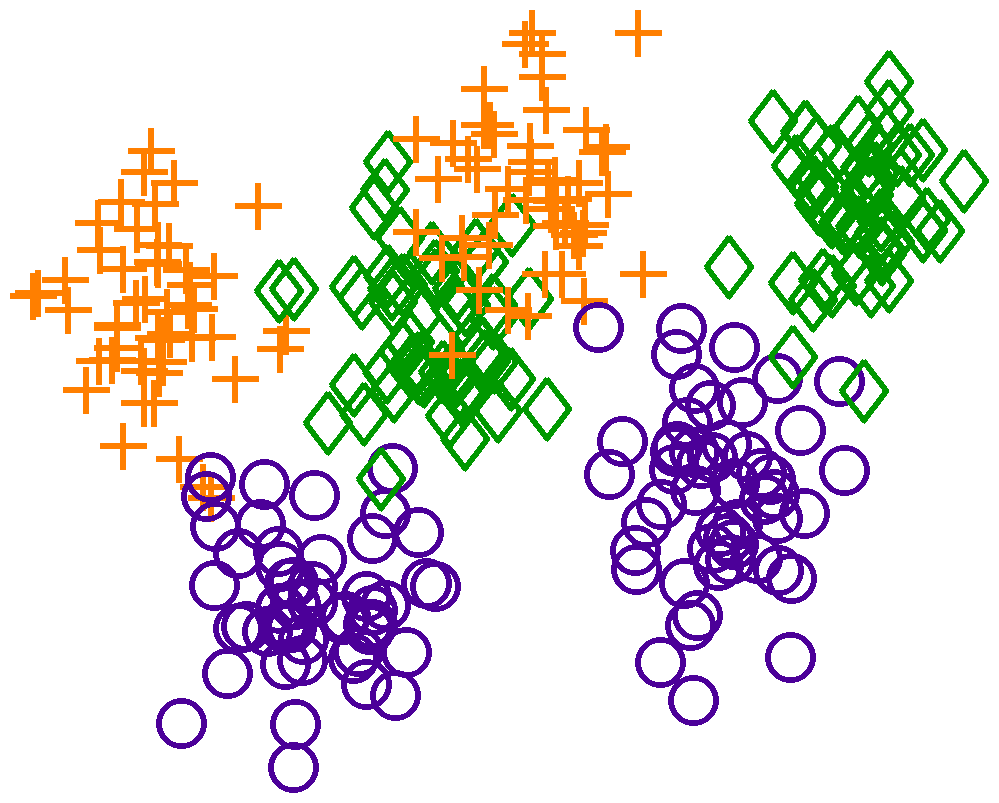}} \quad 
	\subfigure[{\small KPCA ($28\%$)}]{\label{fig:kpca_toy}\includegraphics[width=1.2in]{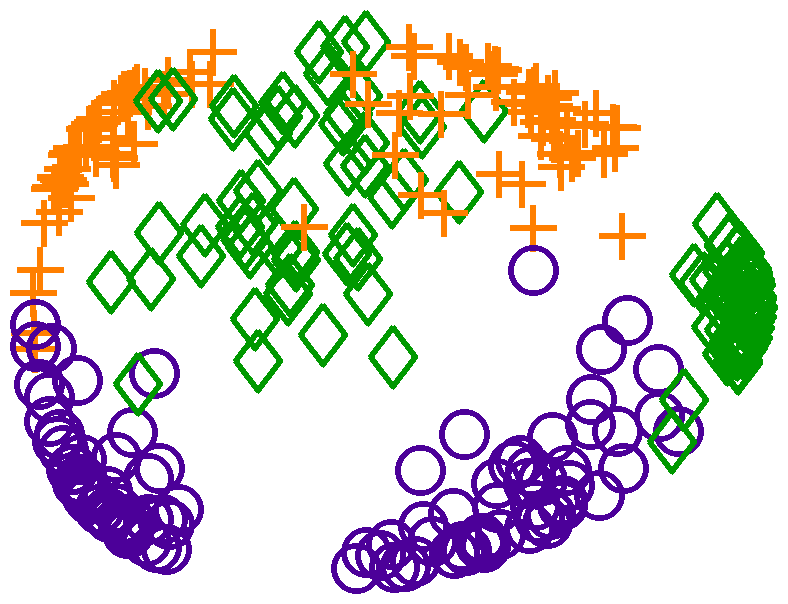}} \quad
	\subfigure[SSTCA ($36\%$)]{\label{fig:sstca_toy}\includegraphics[width=1.2in]{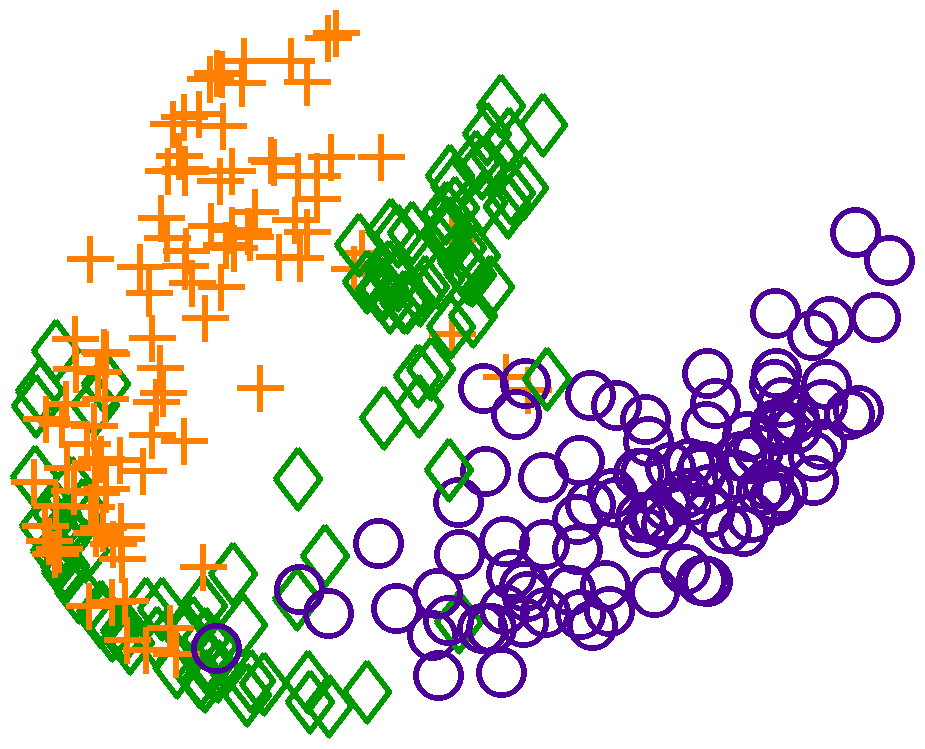}} \quad
	\subfigure[TJM ($44\%$)]{\label{fig:tjm_toy}\includegraphics[width=1.2in]{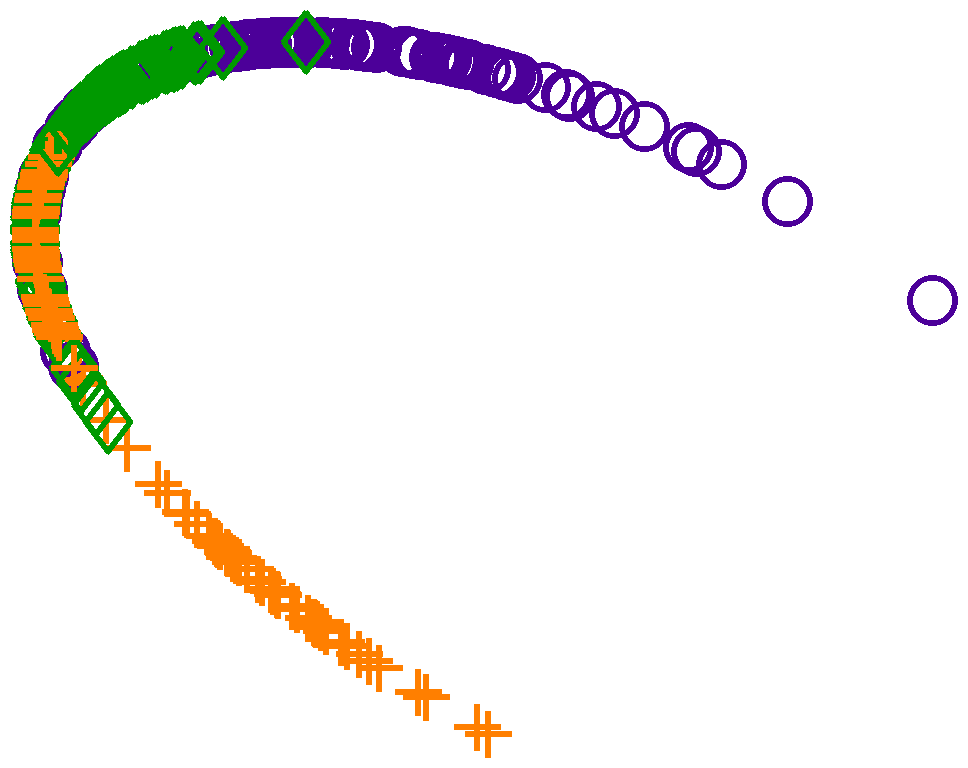}} \quad 
	\subfigure[SCA ($77\%$)]{\label{fig:sca_toy}\includegraphics[width=1.2in]{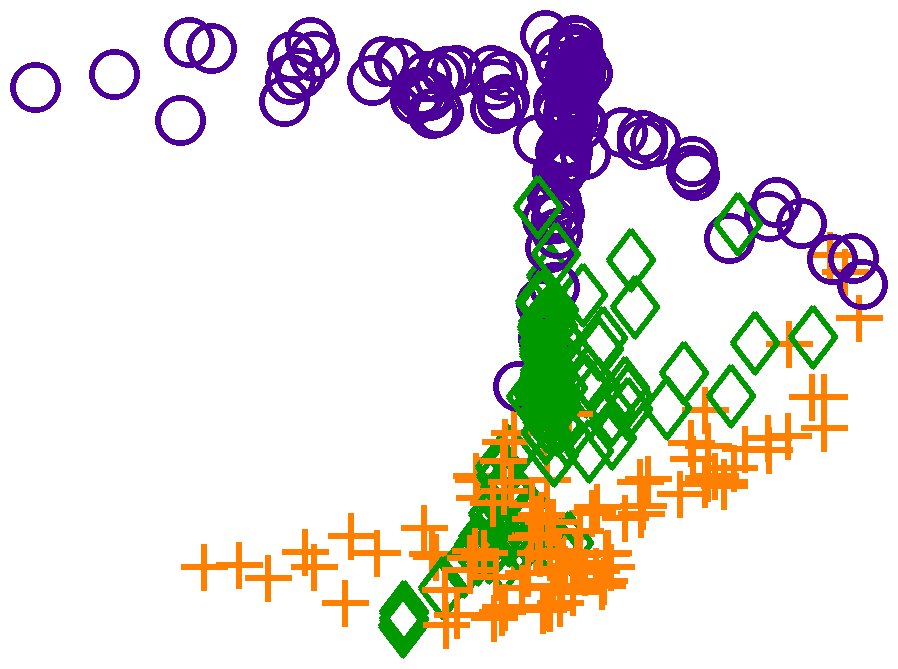}}
	\vspace{-1em}    
	\caption{\textbf{Visualization.}
	Projections of synthetic data onto the first two leading eigenvectors.
	Numbers in brackets indicate the classification accuracy on the target using 1-nearest neighbor (1NN).
	The top and bottom rows show the domains and classes respectively.
	}
	\label{fig:toy}
\end{figure*}
\vspace{-1em}
\subsection{Synthetic data}
\label{sec:toy}
Figure~1 depicts synthetic data that consists of two dimensional data points under three classes with six clusters.
The data points in each cluster were generated from a Gaussian distribution $x^c_{i} \sim \mathcal{N}(\mu^c, \sigma^c)$, 
where $\mu^c$ and $\sigma^c$ is the mean and standard deviation of the $c$-th cluster.
The RBF kernel $k(\ba ,\bb) = \exp(-\frac{\| \ba - \bb \|_2^2}{\sigma^2})$ was used for all algorithms.
All tunable hyper-parameters were selected according to 1-nearest neighbor's test accuracy.
We compare features extracted from Kernel Principal Component Analysis (KPCA), Semi-Supervised Transfer Component Analysis (SSTCA)~\cite{Pan2011}, Transfer Joint Matching (TJM)~\cite{Long2014a}, and SCA.

The top row of Figure~\ref{fig:toy} illustrates how the features extracted from the MMD-based algorithms (SSTCA, TJM, and SCA) reduce the domain mismatch. 
Red and blue colors indicate the source and target domains, respectively.
Good features for domain adaptation should have a configuration of which the red and blue colors are mixed.
This effect can be seen in features extracted from SSTCA, TJM, and SCA, which indicates that the domain mismatch is successfully reduced in the feature space.
In classification, domain adaptive features should also have a certain level of class separability.
The bottom row highlights a major difference between SCA and the other algorithms in terms of the class separability: the SCA features are more clustered with respect to the classes, with more prominent gaps among clusters.
This suggests that it would be easier for a simple function to correctly classify SCA features.

\subsection{Real world object recognition}
\label{sec:exp1_vals}
We summarize the complete domain adaptation results over a range of cross-domain object recognition tasks.
Several real-world image datasets were utilized such as handwritten digits (MNIST~\cite{LeCun:1998aa} and USPS~\cite{usps1994}) and general objects (MSRC~\cite{msrc2005}, VOC2007~\cite{pascal-voc-2007}, Caltech-256~\cite{Griffin2007}, Office~\cite{Saenko:2010aa}).
Three cross-domain pairs were constructed from these datasets: USPS+MNIST, MSRC+VOC2007, and Office+Caltech.

\subsubsection{Data Setup}
\label{sec:exp1_datasetup}
\textbf{The USPS+MNIST}
pair consists of raw images subsampled from datasets of handwritten digits.
MNIST contains 60,000 training images and 10,000 test images of size $28 \times 28$.
USPS has 7,291 training images and 2,007 test images of size $16 \times 16$ \cite{LeCun:1998aa} .
The pair was constructed by randomly sampling 1,800 images from USPS and 2,000 images from MNIST.
Images were uniformly rescaled to size $16 \times 16$ and encoded into feature vectors representing the gray-scale pixel values.
Two \textsc{source} $\rightarrow$ \textsc{target} classification tasks were constructed: \textsc{usps} $\rightarrow$ \textsc{mnist} and \textsc{mnist} $\rightarrow$ \textsc{usps}.

\textbf{The MSRC+VOC2007}
pair consist of 240-dimensional images that share 6 object categories: ``aeroplane'', ``bicycle'',``bird'', ``car'', ``cow'', and ``sheep'' taken from the MSRC and VOC2007~\cite{pascal-voc-2007} datasets.
The pair was constructed by selecting all 1,269 images in MSRC and 1,530 images in VOC2007.
As in \cite{Long:2013aa}, features were extracted from the raw pixels as follows.
First, images were uniformly rescaled to be 256 pixels in length.  
Second, 128-dimensional dense SIFT (DSIFT) features were extracted using the VLFeat open source package~\cite{vedaldi08vlfeat}. 
Finally, a 240-dimensional codebook was created using K-means clustering to obtain the codewords.

\textbf{The Office+Caltech}
consists of 2,533 images of ten categories (8 to 151 images per category per domain), that forms four domains:
($A$) {\sc amazon}, ($D$) {\sc dslr}, ($W$) {\sc webcam}, and ($C$) {\sc caltech}.
{\sc amazon} images were acquired in a controlled environment with studio lighting.
{\sc dslr} consists of high resolution images captured by a digital SLR camera in a home environment under natural lighting.
{\sc webcam} images were acquired in a similar environment to {\sc dslr}, but with a low-resolution webcam.
Finally, {\sc caltech} images were collected from Google Images~\cite{Griffin2007}.
Taking all possible source-target combinations yields 12 cross-domain datasets denoted by 
$A \rightarrow W, A \rightarrow D, A \rightarrow C, \ldots, C \rightarrow D$.
We used two types of extracted features from these datasets that are publicly available: SURF-BoW\footnote{\url{http://www-scf.usc.edu/~boqinggo/da.html}}~\cite{Saenko:2010aa} and $\textnormal{DeCAF}_6$\footnote{\url{http://vc.sce.ntu.edu.sg/transfer_learning_domain_adaptation/domain_adaptation_home.html}}~\cite{Donahue:2014aa}.
\textbf{SURF-BoW} features were extracted using SURF~\cite{Bay:2008aa} and quantized into 800-bin histograms with codebooks computed by K-means on a subset of {\sc amazon} images.
The final histograms were standardized to have zero mean and unit standard deviation in each dimension.
\textbf{Deep Convolutional Activation Features (DeCAF)}
were constructed by \cite{Donahue:2014aa} using the deep convolutional neural network architecture in \cite{Krizhevsky:2012aa}.
The model inputs are the mean-centered raw RGB pixel values that are forward propagated through 5 convolutional layers and 3 fully-connected layers.
We used the outputs from the 6th layer as the features, leading to $4,096$ dimensional $\text{DeCAF}_6$ features.

\begin{table*}[!htb]
    \caption{Accuracy \% on the USPS+MNIST and MSRC+VOC2007 datasets.
    Bold-red and bold-black indicate the best and second best performance.
    }
    \vspace{-1em}
    \centering
    \begin{tabular}{| c || c | c | c | c | c | c | c | c | c | c |}
    \hline
    Dataset & Raw & KPCA & TCA & SSTCA &GFK & TSC & SA &  TJM & uSCA & SCA \\
    \hline
    
    \textsc{usps} $\rightarrow$ \textsc{mnist} & 
      $34.80$ & $42.55$  & $41.75$ & $40.07$ & $43.50$ & $40.95$ & $41.50$ & {\color{red}$\mathbf{52.65}$} &$44.86$ &  $\mathbf{48.00}$\\
      
    \textsc{mnist} $\rightarrow$ \textsc{usps} & 
      $63.06$ & $62.61$ & $59.44$ & $60.13$ & $61.22$ & $59.56$ & $63.95$ & $62.00$ & $\mathbf{64.67}$ & {\color{red} $\mathbf{65.11}$} \\
    
    \textsc{msrc} $\rightarrow$ \textsc{voc} & 
      $28.63$ & $29.35$ & $31.70$ & $30.95$ & $30.63$ & $28.80$ & $30.90$ &$32.48$ & {\color{red} $\mathbf{33.14}$} & $\mathbf{32.75}$ \\
    
    \textsc{voc} $\rightarrow$ \textsc{msrc} & 
      $41.06$ & $47.12$ & $45.78$ & $46.06$ & $44.47$ & $40.58$ & $46.88$ & $46.34$ &  {\color{red} $\mathbf{49.80}$} & $\mathbf{48.94}$ \\
    
    \hline
    \hline
    
    Avg. & $41.89$ & $45.51$ & $44.67$ & $44.30$ & $44.96$ & $42.47$ & $45.81$ & $\mathbf{48.37}$ & $48.12$ & {\color{red} $\mathbf{48.70}$}\\
    
    \hline
    \end{tabular}
    \label{tab:exp1_results_cv}
\end{table*}

\begin{table*}[!htb]
\caption{Accuracy \% on the Office+Caltech images with SURF-BoW features.
    1NN was used as the base classifier.}
    \vspace{-1em}
\centering
\begin{tabular}{| c || c | c | c  | c | c | c | c | c | c | c | }
\hline
Dataset           & Raw     & KPCA      &  TCA     & SSTCA &  GFK & TSC & SA      & TJM     & uSCA    & SCA    \\
\hline
\hline
$A \rightarrow W$ & $29.83$ & $31.86$ & $25.08$ & $28.15$  & {\color{red}$\mathbf{39.32}$} & $32.95$ & $\mathbf{37.63}$ & $33.56$ & $32.88$ & $33.90$                                 \\
$A \rightarrow D$ & $25.48$ & $33.76$  & $31.21$ & $32.25$  & $28.66$ & $33.14$ & $\mathbf{34.49}$ & {\color{red} $\mathbf{35.67}$} & $33.85$ &  $34.21$                                \\
$A \rightarrow C$ & $26.00$ & $37.04$  & $33.93$ & $32.48$  & {\color{red}$\mathbf{39.27}$} & $34.42$ & $37.80$  & $37.58$ & $37.13$ & $\mathbf{38.29}$                                 \\
$W \rightarrow A$ & $22.96$ & $29.44$  & $22.86$ & $25.56$ & $\mathbf{34.03}$ & $27.91$ & {\color{red}$\mathbf{34.34}$} & $29.85$ & $30.41$ & $30.48$                                 \\
$W \rightarrow D$ & $59.24$ & $\mathbf{89.81}$   & $65.61$   &  $80.81$ & $84.71$ & $83.27$ & $80.89$   & $86.62$ & $\mathbf{89.81}$ & {\color{red}$\mathbf{92.36}$}                                   \\
$W \rightarrow C$ & $19.86$ & $27.60$  & $23.06$ & $25.39$ & $28.76$ & $28.62$ & $28.76$ & $\mathbf{29.72}$ & $28.52$ & {\color{red}$\mathbf{30.63}$}                                 \\
$D \rightarrow A$ & $28.50$ & $31.00$  & $30.17$ & $29.16$ & $32.25$ & $31.00$ & {\color{red}$\mathbf{34.24}$} & $30.06$ & $31.00$ & $\mathbf{33.72}$                                 \\
$D \rightarrow W$ & $63.39$ & $84.41$  & $64.75$ & $78.90$ & $80.34$ & $85.13$ & $82.37$ & {\color{red}$\mathbf{90.85}$} & $84.41$ & $\mathbf{88.81}$                                 \\
$D \rightarrow C$ & $26.27$ & $27.78$  & $28.05$ & $28.05$ & $29.12$ & $28.59$ & $31.17$ & $\mathbf{30.72}$ & $27.78$ & {\color{red}$\mathbf{32.32}$}                                 \\
$C \rightarrow A$ & $23.70$ & $40.40$  & $41.02$ & $40.67$ & $41.75$ & $39.21$ & $41.34$ & {\color{red}$\mathbf{45.41}$} & $40.40$ & $\mathbf{43.74}$                                 \\
$C \rightarrow W$ & $25.76$ & $31.53$  & $23.39$ & $26.62$ & {\color{red}$\mathbf{36.61}$} & $29.97$ & $32.20$ & $\mathbf{33.90}$ & $29.15$ & $33.56$                                 \\
$C \rightarrow D$ & $25.48$ & $40.76$  & $34.49$ & $36.45$ & $40.13$ & $35.37$ & {\color{red}$\mathbf{42.86}$} & $40.31$ & $\mathbf{42.04}$ &  $39.49$                                 \\
\hline
\hline
Avg.              & $31.37$ & $42.12$  & $35.29$ & $38.17$ & $42.91$ & $40.80$ & $43.21$ & $\mathbf{43.67}$ & $42.28$ & {\color{red} $\mathbf{44.29}$} \\
\hline
\end{tabular}
\label{tab:office_results_surf_cv}
\end{table*}

\begin{table*}[!htb]
\caption{Accuracy \% on the Office+Caltech images with $\textnormal{DeCAF}_6$ features.
    1NN was used as the base classifier.}
    \vspace{-1em}
\centering
\begin{tabular}{| c || c | c | c  | c | c | c | c | c | c | c | }
\hline
Dataset           & Raw     & KPCA       & TCA     & SSTCA & GFK & TSC & SA      & TJM     & uSCA    & SCA    \\
\hline
\hline
$A \rightarrow W$ & $57.29$ & $67.80$  & $71.86$ & $70.73$ & $68.47$ & $68.54$ & $68.81$ & $72.54$ & $\mathbf{73.22}$ & {\color{red}$\mathbf{75.93}$}                                 \\
$A \rightarrow D$ & $64.97$ & $80.89$  & $78.34$ & $80.13$ & $79.62$ & $79.33$ & $78.34$ & {\color{red}$\mathbf{85.99}$} & $79.43$ &  $\mathbf{85.35}$                                 \\
$A \rightarrow C$ & $70.35$ & $74.53$  & $74.18$ & $72.25$ & $76.85$ & $74.91$ & {\color{red}$\mathbf{80.05}$} & $78.45$ & $74.62$ & $\mathbf{78.81}$                                 \\
$W \rightarrow A$ & $62.53$ & $69.42$  & $79.96$ & $75.65$ & $75.26$ & $73.65$ & $77.77$ & $\mathbf{82.46}$ & $79.52$ & {\color{red}$\mathbf{86.12}$}                                 \\
$W \rightarrow D$ & $\mathbf{98.73}$ & {\color{red}$\mathbf{100}$}   & {\color{red}$\mathbf{100}$} & {\color{red}$\mathbf{100}$}   & {\color{red}$\mathbf{100}$} &   {\color{red}$\mathbf{100}$} & {\color{red}$\mathbf{100}$}   & {\color{red}$\mathbf{100}$} & {\color{red}$\mathbf{100}$}   & {\color{red}$\mathbf{100}$}                                   \\
$W \rightarrow C$ & $60.37$ & $65.72$  & $72.57$ & $69.30$ & $74.80$ & $73.27$ & $\mathbf{74.89}$ & {\color{red}$\mathbf{79.61}$} & $72.81$ & $74.80$                                 \\
$D \rightarrow A$ & $62.73$ & $80.06$  & $88.20$ & $87.30$ & $85.80$ & $84.26$ & $82.67$ & {\color{red}$\mathbf{91.34}$} & $88.71$ & $\mathbf{89.98}$                                 \\
$D \rightarrow W$ & $89.15$ & $98.31$  & $97.29$ & $97.56$ & $\mathbf{98.64}$ & $97.29$ & {\color{red}$\mathbf{99.32}$} & $98.31$ & $98.31$ & $\mathbf{98.64}$                                 \\
$D \rightarrow C$ & $52.09$ & $75.16$  & $73.46$ & $74.45$ & $74.09$ & $76.11$ & $75.69$ & {\color{red} $\mathbf{80.77}$} & $74.98$ & $\mathbf{78.09}$                                 \\
$C \rightarrow A$ & $85.70$ & $88.73$  & $89.25$ & $88.90$ & $88.41$ & $88.52$ & $\mathbf{89.46}$ & {\color{red}$\mathbf{89.67}$} & $88.52$ & $\mathbf{89.46}$                                 \\
$C \rightarrow W$ & $66.10$ & $77.29$  & $80.00$ & $81.22$ & $\mathbf{80.68}$& $80.15$ & $75.93$ & $\mathbf{80.68}$ & $76.27$ & {\color{red} $\mathbf{85.42}$}                                 \\
$C \rightarrow D$ & $74.52$ & $86.62$  & $83.44$ & $84.56$ & $84.56$ & $86.62$ & $83.44$ & $\mathbf{87.26}$ & $86.62$ &  {\color{red}$\mathbf{87.90}$}                               \\
\hline
\hline
Avg.              & $70.38$ & $80.38$  & $82.38$ & $81.84$ & $82.44$ & $81.72$ & $82.20$ & $\mathbf{85.59}$ & $82.75$ & {\color{red} $\mathbf{85.88}$} \\
\hline
\end{tabular}
\label{tab:office_results_decaf_cv}
\end{table*}

\vspace{-0.5em}
\subsubsection{Baselines and Protocol}
\label{sec:exp1_baseline}
We compared the classification performance of the following algorithms:
1) a classifier on raw features (Raw),
2) KPCA,
3) Transfer Component Analysis (TCA)~\cite{Pan2011},
4) SSTCA,
5) Geodesic Flow Kernel (GFK)~\cite{Gong:2012aa},
6) Transfer Sparse Coding (TSC)~\cite{Long:2013aa},
7) Subspace Alignment (SA)~\cite{Fernando:2013aa},
8) TJM~\cite{Long2014a},
9) unsupervised Scatter Component Analysis, and
10) SCA.
For a realistic setting, the tunable hyper-parameters were selected via 5-fold cross validation, according to labels from source domains only.

The above feature learning algorithms were evaluated on three different classifiers: 
1) 1-nearest neighbor (1NN), 2) support vector machines with linear kernel (L-SVM) \cite{Boser:1992aa}, and 3) domain adaptation machines (DAM) \cite{DAM:2012}.
1NN and L-SVM are the standard off-the-shelf classifiers, while DAM is specifically designed for domain adaptation. 
DAM is an extension of SVM that incorporates a domain-dependent regularization to encourage the target classifier sharing similar prediction values with the source classifiers.
We also utilize the linear kernel for DAM.

\subsubsection{Classification Accuracy with 1-Nearest Neighbor}
\label{sec:exp1_acc}
We first report the classification accuracy of the competing algorithms according to 1NN classifier.
The goal is to clearly highlight the adaptation impact induced purely from the representations,
since 1NN basically just measures the distance between features.

Table~\ref{tab:exp1_results_cv} summarizes the classification accuracy on the USPS+MNIST and MSRC+VOC2007 pairs.
We can see that SCA is the best model on average, while the prior state-of-the-art TJM is the second best.
Other domain adaptation algorithms (TCA, SSTCA, GFK, and TSC) do not perform well, even worse than one without adaptation strategy: KPCA.
Surprisingly, the unsupervised version of our algorithm, uSCA, has the highest accuracy on two MSRC+VOC2007 cases.
This indicates that the label incorporation does not help improve domain adaptation on the MSRC+VOC2007, while it clearly does on the USPS+MNIST.
Furthermore, SCA and uSCA always provide improvement over the raw features, while other algorithms, including TJM, fail to do so in $\textsc{mnist} \rightarrow \textsc{usps}$ case.

Surprisingly, SSTCA, which also incorporates label information during training, does not perform competitively. 
The first possible explanation is that SCA \emph{directly} improves class separability, whereas SSTCA maximizes a dependence criterion that relates \emph{indirectly} to separability.
The second is that SSTCA incorporates the manifold regularization that requires a similarity graph, i.e., affinity matrix. 
This graph is parameterized by k-nearest neighbor with $l_2$ distance, which might not be suitable in these cases.

The results on the Office+Caltech pair are summarized in Table~\ref{tab:office_results_surf_cv} (SURF-BoW) and Table~\ref{tab:office_results_decaf_cv} ($\textnormal{DeCAF}_6$).
In general, $\textnormal{DeCAF}_6$ induces stronger discriminative performance than SURF-BoW features, since $\textnormal{DeCAF}_6$ with 1NN only has already provided significantly better performance.
SCA consistently has the best average performance on both features, slightly better than the prior state-of-the-art, TJM. 
On SURF-BoW, SCA is the best model on 3 out of 12 cases and the second best on other 4 cases.
The trend on $\textnormal{DeCAF}_6$ is better -- SCA has the best performance on 5 out of 12 cases, while comes second on other 6 cases.
Although the closest competitor, TJM, has the highest number of individual best cross-domain performance, it requires higher computational complexity than SCA, see Section~\ref{sec:exp1_runtime} below.

Recall that the algorithms' hyper-parameters used to produce all the above results were tuned using labels from the source domain only.
This is the only valid tuning protocol for the unsupervised domain adaptation setting.
Nevertheless, some of the best results established in the literature were obtained using the hyper-parameter tuning on target labels.
For completeness, we also report the results under this tuning-on-target protocol in the supplementary material.

\begin{figure*}[htp]
	\centering
	\subfigure[MNIST+USPS, MSRC+VOC]{\includegraphics[width=2.3in]{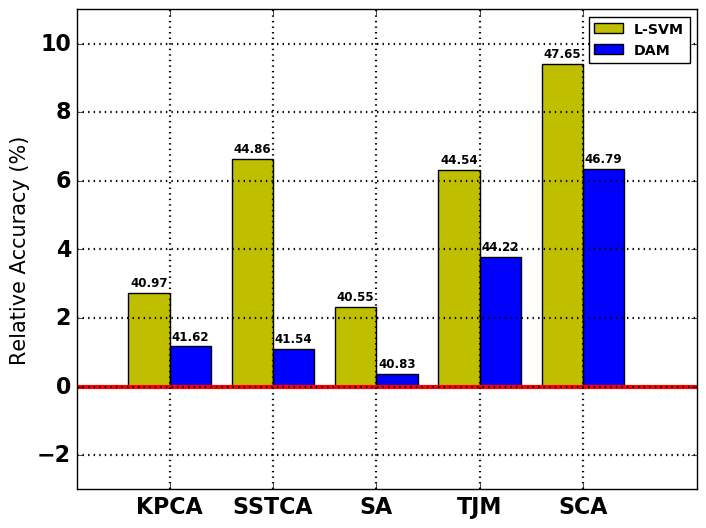}} \label{fig:digobj}\quad
	\subfigure[Office+Caltech (SURF-BoW)]{\includegraphics[width=2.3in]{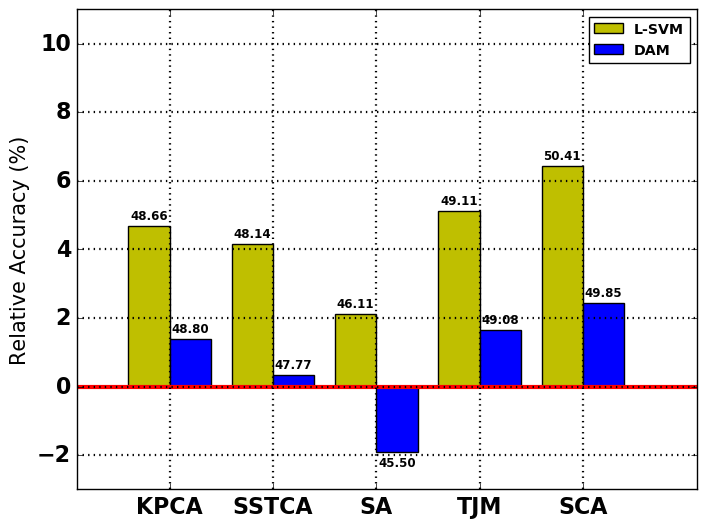}} \label{fig:office_surf} \quad
	\subfigure[Office+Caltech ($\textnormal{DeCAF}_6$)]{\includegraphics[width=2.3in]{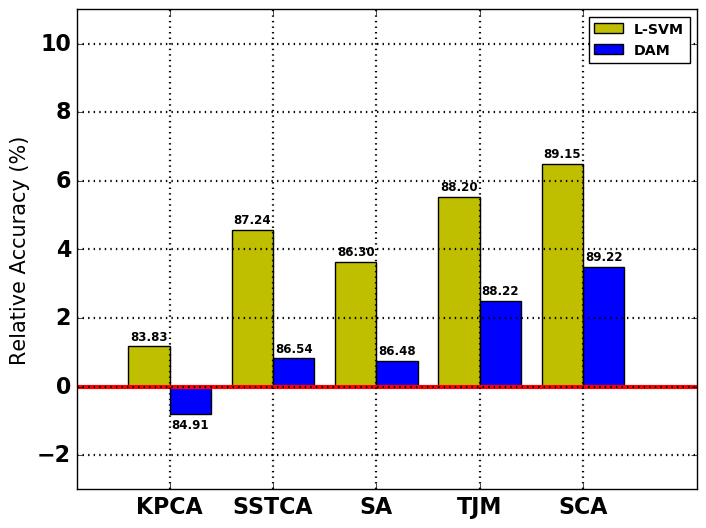}} \label{fig:office_decaf}
	\caption{L-SVM and DAM average performance accuracy (\%) relative to the performance on Raw features. The numbers on the top or bottom of the bars show the absolute accuracy. 
	The red line indicates the Raw baseline performance, see Table \ref{tab:raw_summ} for the exact numbers.
	}
	\label{fig:svm_dam}
\end{figure*}

\vspace{-1em}
\begin{table}[!htb]
\centering
\caption{Average accuracy (\%) on Raw features.}
\vspace{-1em}
\begin{tabular}{| c || c | c | c  | }
\hline
Dataset & 1-NN & L-SVM & DAM \\
\hline
MNIST+USPS, MSRC+VOC & $\mathbf{41.89}$ & $38.23$ & $40.45$ \\
Office+Caltech (SURF-BoW) & $31.37$ & $43.98$ & $\mathbf{47.42}$ \\
Office+Caltech ($\textnormal{DeCAF}_6$) & $70.38$ & $82.66$ & $\mathbf{85.72}$ \\
\hline
\end{tabular}
\label{tab:raw_summ}
\end{table}

\vspace{-1em}
\subsubsection{Classification Accuracy with L-SVM and DAM}
Next we report the results with L-SVM and DAM as the base classifiers for the feature learning algorithms.
For succinctness, we compare the performance of five algorithms: KPCA, SSTCA, SA, TJM, and SCA, presented in Figure \ref{fig:svm_dam}.
The bar chart shows the average accuracies relative to the performance on Raw features (indicated by line $y=0$ in red); the numbers alongside the bars indicate the absolute accuracies.
Table \ref{tab:raw_summ} summarizes the absolute accuracies on Raw features.

In general, all feature learning algorithms rectify the domain adaptation performances over Raw features, except in two cases: 
SA on the Office+Caltech with SURF-BoW features and KPCA on the Office+Caltech with $\textnormal{DeCAF}_6$ features.
Considering the absolute accuracies, we find that the best average performances on each dataset are still provided by SCA, a similar trend as in the 1NN results.
This confirms the effectiveness of SCA regardless of the classifier choice, at least, among 1NN, L-SVM, and DAM.

Let us now compare the absolute average performance of L-SVM and DAM with the performance of 1NN. 
L-SVM and DAM evidently provide a considerable performance improvement only on the Office+Caltech dataset.
Their performances on less powerful features, that is, the features extracted from the MNIST+USPS and MSRC+VOC, are even worse than 1NN.
A useful lesson from this finding is that one should make use better features to take the real benefit of more advanced classifiers in the context of domain adaptation.

Finally, we seek to investigate the performance impact induced by DAM in comparison to L-SVM.
DAM is expected to provide a better performance, since it is specifically designed for domain adaptation.
From Table \ref{tab:raw_summ} we can see that DAM outperforms L-SVM when operating on Raw features.
Surprisingly, that is not always the case when a feature learning algorithm is applied.
Moreover, L-SVM always produces higher performance gain relative to Raw features.than DAM.
This could be attributed to overfitting considering that DAM has more hyper-parameters than L-SVM.
That is, combining DAM with a feature learning algorithm complicates the whole processs -- recall that the hyper-parameter selection is based on a validation on source data.

\subsubsection{Runtime Performance}
\label{sec:exp1_runtime}
Table~\ref{tab:runtime} compares the average runtime performance of SCA with some other algorithms: KPCA, TCA, SA, TSC, and TJM, on the MNIST+USPS, MSRC+VOC, and Office+Caltech (with $\textnormal{DeCAF}_6$).
All algorithms were executed with MATLAB R2014b by a machine with Intel Core i5-240 CPU, Arch Linux 64-bit OS, and 4GB RAM.
Note that KPCA, TCA, and SCA basically utilizes the same optimization procedure: a single iteration of the eigenvalue decomposition.
TJM requires several iterations of the eigenvalue decomposition with an additional gradient update in each iteration, 
while TSC solves the dictionary learning and sparse coding with an iterative procedure.

In general, SCA is significantly faster than TJM, the closest competitor in accuracy, and TSC.
Specifically, SCA is $3$ to $6\times$ faster than TJM, and $>50\times$ faster than TSC.
SCA runs at about the same speed as TCA on the MNIST+USPS and MSRC+VOC, and the same speed as SA on the Office+Caltech.
In several other cases, SCA performs slower than KPCA, TCA, and SA.
Note that KPCA, TCA, and SA are less competitive in accuracy compared to SCA and TJM so that the runtime gap is less interesting to be concerned about.

\vspace{-1em}
\begin{table}[!htb]
	\caption{Average runtime (seconds) over all cross-domain tasks in each dataset.}
	\vspace{-1em}
	\centering
	\begin{tabular}{| c || c | c | c  | c | c | c | }
	\hline
	Dataset & KPCA & TCA & SA & TJM & TSC &  SCA \\
	\hline
	MNIST+USPS & $9.83$ & $41.55$ & $0.75$ & $269.44$ & $3072.25$ & $42.74$ \\
	MSRC+VOC & $3.23$ & $20.92$ & $0.69$ & $127.35$ & $2051.05$ & $36.86$ \\
	Office+Caltech & $0.84$ & $3.03$ & $8.35$ & $29.65$ & $1070.98$ & $8.82$ \\
	\hline
	\end{tabular}
	\label{tab:runtime}
\end{table}


\vspace{-1em}
\section{Experiment II : Domain Generalization} 
\label{sec:exp2}
In the second set of experiments, we show that our proposed algorithm is also applicable for domain generalization 
and achieves state-of-the-art performance on some object and action recognition datasets.
We evaluated our algorithms on three cross-domain datasets: the  VLCS, Office+Caltech, and IXMAS~\cite{Ixmas2006}.

\vspace{-0.5em}
\subsection{Data setup}
\label{exp2:data_setup}
The first cross-domain dataset, which we refer to as the \textbf{VLCS}  consists of images from PASCAL VOC2007 (V)~\cite{pascal-voc-2007}, LabelMe (L)~\cite{LabelMe}, Caltech-101 (C)~\cite{Griffin2007}, and SUN09 (S)~\cite{SUN09} datasets, each of which represents one domain.
These datasets share five object categories: 
\emph{bird}, \emph{car}, \emph{chair}, \emph{dog}, and \emph{person}.
Each domain in the VLCS dataset was divided into a training set ($70\%$) and a test set ($30\%$) by random selection from the overall dataset.
The detailed training-test configuration for each domain is summarized in the supplementary material.
We employed the $\textnormal{DeCAF}_6$ features \cite{Donahue:2014aa} with dimensionality of 4,096 as inputs to the algorithms.
These features are publicly available.\footnote{\url{http://www.cs.dartmouth.edu/~chenfang/proj_page/FXR_iccv13/index.php}}

The second cross-domain dataset is the \textbf{Office+Caltech} dataset, see Section~\ref{sec:exp1_datasetup} for a detailed explanation about this dataset.
We also used $\textnormal{DeCAF}_6$ features extracted from this dataset.\footnote{\url{http://vc.sce.ntu.edu.sg/transfer_learning_domain_adaptation/}}
The third dataset is the \textbf{IXMAS} dataset~\cite{Ixmas2006} that contains videos of the 11 actions, recorded with different actors, cameras, and viewpoints.
This dataset has been used as a benchmark for evaluating human action recognition models.
To simulate the domain generalization problem, we followed the setup proposed in \cite{Xu2014}: only frames from five actions were utilized (\emph{check watch}, \emph{cross arms}, \emph{scratch head}, \emph{sit down}, and \emph{get up}) with domains represented represented by camera viewpoints (Cam 0, Cam 1, ..., Cam 4). 
The task is to learn actions from particular camera viewpoints and classify actions on unseen viewpoints.
In the experiment, we used the dense trajectories features~\cite{DTrajectory2013} extracted from the raw frames and applied K-means clustering to build a codebook with 1,000 clusters 
for each of the five descriptors: trajectory, HOG, HOF, MBHx, MBHy. 
The bag-of-words features were then concatenated forming a 5,000 dimensional features for each frame.

\begin{table*}[!htb]
\caption{The groundtruth 1NN accuracy $\%$ of five-class classification when training on one dataset (the left-most column) and testing on another (the upper-most row).
The bold black numbers indicate \emph{in-domain} performance, while the plain black indicate  \emph{cross-domain} performance. 
``Self'' refers to training and testing on the same dataset, same as the bold black numbers and ``mean others'' refers to the average performance over all cross-domain cases.
}
\vspace{-1em}
  \centering
    \begin{tabular}{| c || c | c | c | c || c | c | c |}
    \hline
    Training/Test & VOC2007  & LabelMe & Caltech-101 & SUN09 & Self & Mean others & Percent drop ($\frac{(\mathrm{Self} - \mathrm{MeanOthers}) * 100}{\mathrm{Self}}$)\\
    \hline
    VOC2007 & $\mathbf{72.46}$ & $52.45$ & $89.17$ & $60.00$ & $\mathbf{72.46}$ & $67.20$ & {\color{red}$\sim7\%$}\\
    LabelMe & $54.99$ & $\mathbf{63.74}$ & $79.72$ & $46.90$ & $\mathbf{63.74}$ & $60.54$ & {\color{red}$\sim5\%$ }\\ 
    Caltech-101 & $53.70$ & $44.79$ & $\mathbf{99.53}$ & $44.87$ & $\mathbf{99.53}$ & $47.49$ & {\color{red}$\sim52\%$}\\
    SUN09 & $51.63$ & $50.69$ & $50.71$ & $\mathbf{68.12}$ & $\mathbf{68.12}$ & $51.01$ & {\color{red}$\sim25\%$}\\
    \hline
    Mean others & $53.44$ & $49.31$ & $73.19$ & $50.59$ & $\mathbf{75.96}$ & $56.63$ & {\color{red}$\sim25\%$}\\
    \hline
    \end{tabular}
    \label{tab:vlcs_bias}
\end{table*}
\begin{table*}[!htb]
	\caption{Domain generalization performance accuracy ($\%$) on the VLCS dataset with $\textnormal{DeCAF}_6$ features as inputs. 
	The accuracy of all feature learning-based algorithms: Raw, KPCA, uSCA, DICA, SCA is according to 1-nearest neighbor (1NN) classifier.
	Bold red and bold black indicate the best and the second best performance, respectively.}
	\vspace{-1em}
	\centering
	\begin{tabular}{| c | c || c | c || c | c | c | c | c | c | c | }
	\hline
	Source & Target & 1NN & L-SVM & KPCA & Undo-Bias & UML & LRE-SVM & uSCA & DICA & SCA \\
	\hline
	L,C,S & V & $57.26$ & $58.44$ & $60.22$ & $54.29$ & $56.26$ & $\mathbf{60.58}$ & $58.54$ & $59.62$& {\color{red} $\mathbf{64.36}$} \\	
	V,C,S & L & $52.45$ & $55.21$ & $51.94$ & $58.09$ & $58.50$ & {\color{red}$\mathbf{59.74}$} & $54.08$ & $51.82$& $\mathbf{59.60}$ \\
	V,L,S & C & $\mathbf{90.57}$ & $85.14$ & $90.09$ & $87.50$ & {\color{red} $\mathbf{91.13}$} & $88.11$ & $85.14$ & $78.30$ & $88.92$ \\
	V,L,C & S & $56.95$ & $55.23$ & $55.03$ & $54.21$ & $\mathbf{58.49}$ & $54.88$ & $55.63$ & $55.33$ & {\color{red} $\mathbf{59.29}$}\\
	C,S & V,L & $55.08$ & $55.58$ & $55.64$ & $\mathbf{59.28}$ & $56.47$ & $55.04$ & $53.98$ & $50.90$ & {\color{red} $\mathbf{59.50}$}\\
	C,L & V,S & $52.60$ & $51.80$ & $50.70$ & $\mathbf{55.80}$& $54.72$ & $52.87$ & $49.05$& $55.47$ & {\color{red}$\mathbf{55.96}$}\\
	V,C & L,S & $56.62$ & $59.99$ & $54.66$ & {\color{red}$\mathbf{62.35}$} & $55.49$ & $58.84$ & $55.89$ & $58.08$ &  $\mathbf{60.77}$\\
	\hline
	 \multicolumn{2}{| c ||}{Avg.}&  $60.22$ & $60.20$ & $59.47$ & $\mathbf{61.65}$ & $61.58$ & $61.44$ & $58.90$ & $58.50$ & {\color{red}$\mathbf{64.06}$}\\

	\hline
	\end{tabular}
	\label{tab:exp2_vlcs}
\end{table*}
\begin{table*}[!htb]
	\caption{Domain generalization performance accuracy ($\%$) on the Office+Caltech dataset with $\textnormal{DeCAF}_6$ features as inputs.}
	\vspace{-1em}
	\centering
	\begin{tabular}{| c | c || c | c ||  c | c | c | c | c | c | c |}
	\hline
	Source & Target & 1NN & L-SVM & KPCA & Undo-Bias & UML & LRE-SVM & uSCA &  DICA & SCA\\
	\hline
	W, D, C & A    & $85.39$  & $91.34$ & $89.14$ & $90.98$ & $91.02$ & $\mathbf{91.87}$ & $89.46$ & $90.40$ &  {\color{red}$\mathbf{92.38}$} \\
	A,W,D   & C    & $73.73$  & $84.95$ & $75.87$ &  $85.95$ & $84.59$ & $\mathbf{86.38}$ & $77.15$ & $84.33$ &  {\color{red}$\mathbf{86.73}$} \\
	A,C      & D,W &  $67.92$ & $81.86$ & $78.99$ & $80.49$ & $82.29$ & $\mathbf{84.59}$ & $78.10$ & $79.65$ & {\color{red}$\mathbf{85.84}$}\\
	D,W     & A,C  &  $67.09$ & $77.94$ & $68.84$ & $69.98$ & $\mathbf{79.54}$ & {\color{red}$\mathbf{81.17}$} & $71.74$ &  $69.73$ & $75.54$\\
	\hline
	\multicolumn{2}{ | c ||}{Avg.} & $72.28$ & $84.02$ & $77.71$ & $81.85$ & $84.36$ & {\color{red} $\mathbf{86.00}$} & $79.11$ & $81.02$ & $\mathbf{85.12}$ \\
	\hline
	\end{tabular}
	\label{tab:office_results}
\end{table*}
\begin{table*}[!htb]
	\caption{Domain generalization performance accuracy ($\%$) on the IXMAS dataset with dense trajectory-based features.}
	\vspace{-1em}
	\centering
	\begin{tabular}{| c | c || c | c || c | c | c | c | c | c | c | }
	\hline
	Source & Target & 1NN & L-SVM & KPCA & Undo-Bias & UML &  LRE-SVM & uSCA & DICA & SCA \\
	\hline
	Cam 0,1 & Cam 2,3,4 & $58.24$ & $73.26$ &  $67.77$ & $69.03$ & $74.14$ & $\mathbf{79.96}$ & $66.67$ & $65.93$ & {\color{red}$\mathbf{80.59}$}\\
	Cam 2,3,4 & Cam 0,1 & $20.33$ & $84.07$ & $41.21$ & $60.56$ & $63.79$ & $\mathbf{80.15}$ & $51.09$ & $78.02$ & {\color{red} $\mathbf{85.16}$}\\
	Cam 0,1,2,3 & Cam 4 & $39.56$ & $67.03$ & $59.34$ & $56.84$ & $60.37$ & {\color{red} $\mathbf{74.97}$} & $61.54$ & $62.64$  & $\mathbf{70.33}$\\
	\hline
	\multicolumn{2}{| c ||}{Avg.} & $39.38$ & $72.59$ & $56.10$ & $62.14$ & $66.10$ &  $ \mathbf{78.36}$ & $59.77$& $68.86$ & {\color{red}$\mathbf{78.69}$} \\
	\hline
	\end{tabular}
	\label{tab:ixmas_results}
\end{table*}

\subsection{Baselines and Protocol}
\label{sec:exp2_protocol}
We compared our algorithms with the following baselines:

\begin{itemize}[leftmargin=*]
 \item \textbf{1NN}: 1-nearest neighbor classifier.
 \item \textbf{L-SVM}: SVM classifier with linear kernel.
 \item \textbf{KPCA}~\cite{Scholkopf1998}: Kernel Principal Component Analysis.
 \item \textbf{Undo-Bias}~\cite{Khosla2012}: a multi-task SVM-based algorithm for undoing dataset bias. Three hyper-parameters ($\lambda, C_1, C_2$) require tuning. 
 Since the original formulation was designed for binary classification, we performed the following setup for multi-class classification purposes.
 We trained $C$ individual Undo-Bias classifiers $f^{ub}_k: \bbR^d \rightarrow \{-1, 1\}, \forall k = 1, \ldots, C$, where $C$ is the number of classes.
 At the prediction stage, given a test instance $(\hat{\bx}, \hat{y} )$ we computed $\hat{Y} := \{ k | \forall k=1, \ldots, C: f^{ub}_k = 1\}$.
 Finally, we verified whether $\hat{y} \in \hat{Y}$.
 \item \textbf{UML}~\cite{Fang2013}: a structural metric learning-based algorithm that aims to learn a less biased distance metric for classification tasks.
 The initial tuning proposal for this method was using a set of weakly-labeled data retrieved from querying class labels to search engine. However, here we tuned the hyper-parameters using the same k-fold cross-validation strategy as others for a fair comparison.
 \item \textbf{DICA}~\cite{Muandet2013}: a kernel feature extraction method for domain generalization. DICA has three tunable hyper-parameters.
 \item \textbf{LRE-SVM}~\cite{Xu2014}: a non-linear exemplar-SVMs model with a nuclear norm regularization to impose a low-rank \emph{likelihood matrix}. 
 LRE-SVM has four hyper-parameters ($\lambda_1$, $\lambda_2$, $C_1$, $C_2$) that require tuning.
\end{itemize}
Undo-Bias, UML, and LRE-SVM are the prior state-of-the-art domain generalization algorithms for object recognition tasks.
Note that Undo-Bias, DICA, and UML cannot be applied to the domain adaptation setting -- they do not all

We used 1-nearest neighbor (1NN) as the base classifier for all feature learning-based algorithms: KPCA, DICA, uSCA/uDICA, and SCA.
The tunable hyper-parameters were selected according to labels from source domains.
For all kernel-based methods, the kernel function is the RBF kernel, $k(\ba, \bb) = \exp( - \frac{\| \ba - \bb\|^2}{\sigma^2}) $, with a kernel bandwidth $\sigma$ computed by \emph{median heuristic}.
Note that the unsupervised DICA (uDICA) is almost identical to uSCA in this case. The only difference is that uSCA has a control parameter $\delta > 0$ for the domain scatter/distributional variance term.

\vspace{-1em}
\subsection{Results on the VLCS Dataset}
On this dataset, we first conducted the standard training-test evaluation using 1-nearest neighbor (1NN), i.e., learning the model on a training set from one domain and testing it on a test set from another domain, to check the groundtruth performance and also to identify the existence of the dataset bias.
The groundtruth evaluation results are summarized in Table~\ref{tab:vlcs_bias}.
In general, the dataset bias indeed exists despite the use of the state-of-the-art deep convolution neural network features $\textnormal{DeCAF}_6$.
For example, the average cross-domain performance, i.e., "Mean others", is $56.63\%$, which is $25\%$ drop from the corresponding in-domain performance: $75.96\%$.
In particular, Caltech-101 has the highest bias, while LabelMe is the least biased dataset indicated by the largest and smallest performance drop, respectively.

We then evaluated the domain generalization performance over seven cross-domain recognition tasks.
The complete results are summarized in Table~\ref{tab:exp2_vlcs}.
We can see that SCA is the best model on 5 out of 7 tasks, outperforms the prior state-of-the-art, LRE-SVM.
It almost always has better performance than the `raw' baseline, except when Caltech-101 is the target domain.
On average, SCA is about $2\%$ better than its closest competitor on this dataset, Undo-Bias.
The VLCS cross-domain recognition is a hard task in general, since the best model (SCA) only provides $<4\%$ average improvement over the raw baseline.
Furthermore, three algorithms, two of which are the domain generalization-based methods (uSCA, DICA), cannot achieve even better performance than the raw baseline.

\subsection{Results on the Office+Caltech Dataset}
We evaluated our algorithms on several cross-domain cases constructed from the Office+Caltech dataset. 
The detailed evaluation results on four cases with $\textnormal{DeCAF}_6$ are reported in Table~\ref{tab:office_results}.
We do not report other cross-domain cases that are possibly constructed from this dataset, such as $A,D,C \rightarrow W$ and $A,W,C\rightarrow D$, 
since 1NN on Raw features has already provided high accuracies ($> 95\%$).

The closest competitor to SCA is LRE-SVM.
Although LRE-SVM performs best on average, SCA has the best performance on three out of four cross-domain cases and comes second on average.
The only case when SCA underperforms LRE-SVM is that of $D,W \rightarrow A,C$.
Note that the LRE-SVM algorithm is more complex than SCA both in the optimization procedure and in the number of tunable hyper-parameters.

However, the unsupervised version of our algorithm, uSCA, which is the same as uDICA~\cite{Muandet2013} in the domain generalization case, cannot compete with the state-of-the-art models. 
It is only slightly better than KPCA on average.
This suggests that incorporating labeled information from source domains during feature learning does improve domain generalization on the Office+Caltech cases.

\subsection{Results on the IXMAS dataset}
Table~\ref{tab:ixmas_results} summarizes the classification accuracies on the IXMAS dataset over three cross-domain cases.
We can see that the standard baselines (Raw, KPCA) cannot match other algorithms with domain generalization strategies.
In this dataset, SCA has the best performance on two out of three cases and on average. 
In particular, SCA is significantly better than others on Cam 2,3,4 $\rightarrow$ Cam 0,1 case.
LRE-SVM remains the closest competitor of SCA -- it has the second best average performance with one best cross-domain case.

\vspace{-1em}
\subsection{Runtime Performance}
Next we report the average (training) runtime performance over all cross-domain recognition tasks in each dataset.
All algorithms were executed using the same software and machine as described in Section~\ref{sec:exp1_runtime}.
From Table~\ref{tab:runtime_dg}, we can see that the runtime of SCA is on par with KPCA and DICA, which is expected since they utilize the same optimization procedure: a single run with a generalized eigenvalue decomposition.

SCA is significantly faster than some prior state-of-the-art domain generalization methods (Undo-Bias, UML, and LRE-SVM).
For example, on the VLCS dataset, Undo-Bias, UML, and LRE-SVM require $\sim30$ minutes, while SCA only needs $\sim5$ minutes average training time.
An analogous trend can also be seen in the case of Office+Caltech and IXMAS datasets.
This outcome indicates that SCA is better suited for domain generalization tasks than the competing algorithms if a training stage in real time is required.

\begin{table}[!htb]
	\caption{Average domain generalization runtime (seconds) over all cross-domain recognition tasks in each dataset.}
	\vspace{-1em}
	\centering
	\scalebox{0.82}{
	\begin{tabular}{| c || c | c | c | c | c | c |}
	\hline
	Dataset & KPCA & Undo-Bias & DICA & UML & LRE-SVM & SCA \\
	\hline
	VLCS 		& $201.99$ & $1,925.54$ & $336.92$ & $1,652.67$ & $2,161.60$ & $300.94$\\
	Office+Caltech 	& $6.53$ &  $589.50$ &  $17.90$ & $413.25$ & $695.31$  & $18.49$ \\
	IXMAS 		& $0.50$ & $49.14$ &  $0.79$ & $57.67$ & $65.47$ & $0.96$ \\
	\hline
	\end{tabular}
	}
	\label{tab:runtime_dg}
\end{table}

\vspace{-1em}
\section{Conclusions}
\label{sec:conc}
The scatter-based objective function is 
a straightforward way to encode the relevant structure of the domain adaptation and domain generalization problems. 
SCA uses variances between subsets of the data to construct a linear transformation that dampens unimportant distinctions (within labels and between domains) and amplifies useful distinctions (between labels and overall variability). 
Extensive experiments on several cross-domain image datasets show that SCA is much faster than competing algorithms and provides the state-of-the-art performance on both domain adaptation and domain generalization.

Our theoretical analysis shows that scatter with two input domains, i.e., domain scatter, provides generalization bounds for domain adaptation~\cite{Mansour2009}. In the setting of domain generalization, recall remark~\ref{rem:gen_scatter}, prior work has shown that the distributional variance (which is a special case of scatter) arises as one of the terms controlling generalization performance \cite{Muandet2013}. Scatter is thus a unifying quantity that controls generalization performance in domain adaptation and generalization.

SCA is a natural extension of Kernel PCA, Kernel Fisher Discriminant and TCA. 
In contrast, many domain adaptation methods use objective functions that combine the total variance and MMD with quantities that are fundamentally different in kind such as the graph Laplacian \cite{Long:2013aa}, sparsity constraints \cite{Long:2013aa,Long2014a}, the Hilbert-Schmidt independence criterion \cite{Pan2011} or the central subspace \cite{Muandet2013}. 
SCA can easily be extended to semi-supervised domain adaptation, by incorporating target labels into the class scatters. 
Finally, we remark that it should be possible to speed up SCA for large-scale problems using random features~\cite{Rahimi:07}. 

%

In general, dataset bias remains far from solved. 
Existing algorithms perform satisfactorily ($\geq80\%$ accuracy) only in several cross-domain tasks, even when using powerful feature extraction methods such as $\textnormal{DeCAF}_6$ (for images) and dense trajectory-based features (for videos).
Using less powerful features (raw pixels or SURF-BoW) is clearly unsatisfactory.
Thus, it is crucial to develop more fundamental feature learning algorithms that significantly reduce dataset bias in a wide range of situations.

\vspace{-1em}
\section{Acknowledgments} 
The authors would like to thank Zheng Xu for sharing the extracted dense trajectories features from the IXMAS dataset, and also Chen Fang for sharing the \emph{Unbiased Metric Learning} code and useful discussions.
\vspace{-1em}

\bibliographystyle{IEEEtran}
{
\footnotesize
\bibliography{pamirefs}
}

\begin{IEEEbiography}[{\includegraphics[width=1in,height=1.25in,clip,keepaspectratio]{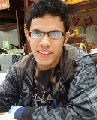}}]{Muhammad Ghifary} 
received BEng and MEng degrees from Institut Teknologi Bandung (ITB), Indonesia, 
and has been awarded a PhD degree from Victoria University of Wellington, New Zealand.
He is currently working at Weta Digital as a research intern.
Previously, he worked at ITB and Catholic Parahyangan University as a lecturer.
He is a student member of the IEEE and AAAI. 
He has published his work at conferences such as ICASSP, ICCV, and ICML.
His main research interests include domain adaptation, transfer learning, representation learning, deep learning, and their applications in computer vision.
\end{IEEEbiography}
\vspace{-3em}
\begin{IEEEbiography}[{\includegraphics[width=1in,height=1.25in,clip,keepaspectratio]{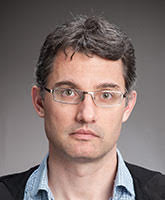}}]{David Balduzzi}
received a PhD degree in mathematics from the University of Chicago in 2006. He subsequently held positions at the Max Planck Institute for Intelligent Systems and ETH Zurich. He is currently a senior lecturer in the School of Mathematics and Statistics at Victoria University Wellington. His research interests are machine learning and computational neuroscience. He has published his work at conferences such as NIPS, ICML, UAI, AAAI, AAMAS and ICCV, and in journals such as Annals of Statistics, Network Science and PLoS Computational Biology.
\end{IEEEbiography}
\vspace{-3em}
\begin{IEEEbiography}[{\includegraphics[width=1in,height=1.25in,clip,keepaspectratio]{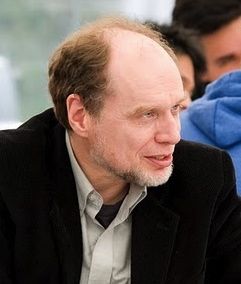}}]{W. Bastiaan Kleijn}
has been a Professor at Victoria University of Wellington since 2010. 
He is also a Professor at Delft University of Technology (part-time) and was a Professor at KTH, where he headed the Sound and Image Processing Laboratory until he moved to New Zealand. Before joining KTH in 1996, he worked at AT \&T Bell Laboratories (Research) on speech processing. 
He was a founder of Global IP Solutions, which developed voice and video processing engines for, among others, Google, Skype, and Yahoo and was sold to Google in 2010. 
He holds a Ph.D. in electrical engineering from Delft University of Technology and an M.S.E.E. from Stanford. 
He also earned a Ph.D. in soil science and an M.S. in physic from the University of California, Riverside. 
He is a Fellow of the IEEE.
\end{IEEEbiography}
\vspace{-3em}
\begin{IEEEbiography}[{\includegraphics[width=1in,height=1.25in,clip,keepaspectratio]{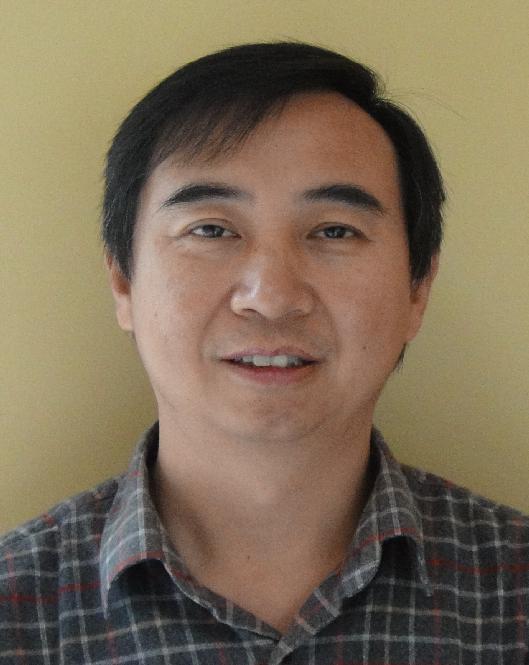}}]{Mengjie Zhang}
received the B.E. and M.E. degrees from Artificial Intelligence Research Center, Agricultural University of Hebei, Hebei, China, and the Ph.D. degree in computer science from RMIT
University, Melbourne, VIC, Australia, in 1989, 1992, and 2000, respectively.
Since 2000, he has been with the Victoria University of Wellington, Wellington, New Zealand, where he is currently Professor of Computer Science, Head of the Evolutionary Computation Research Group, and the Associate Dean (Research and Innovation) in the Faculty of Engineering.
His current research interests include evolutionary computation, particularly genetic programming, particle swarm optimization, and learning classifier systems with application areas of image analysis, multiobjective optimization, classification with unbalanced data, feature selection and reduction, and job shop scheduling. 
He has published over 350 academic papers in refereed international journals
and conferences.

\end{IEEEbiography} 
\end{document}